\newtheorem{theorem}{Theorem}
\newtheorem{condition}[theorem]{Condition}
\newtheorem{corollary}[theorem]{Corollary}
\newtheorem{lemma}[theorem]{Lemma}
\newtheorem{proposition}[theorem]{Proposition}
\newenvironment{proof}[1][Proof]{\noindent\textbf{#1.} }{\ \rule{0.5em}{0.5em}}
\begin{document}

\title{\textbf{Structured Sparsity and Generalization}}
\author{Andreas Maurer \\
Adalbertstr. 55\\
D-80799, M\"{u}nchen\\
\emph{am@andreas-maurer.eu} \and Massimiliano Pontil \\
Dept. of Computer Science, UCL \\
Gower St. London, UK \\
\emph{m.pontil@cs.ucl.ac.uk}}
\maketitle

\begin{abstract}
We present a data dependent generalization bound for a large class of
regularized algorithms which implement structured sparsity constraints. The
bound can be applied to standard squared-norm regularization, the Lasso, the
group Lasso, some versions of the group Lasso with overlapping groups,
multiple kernel learning and other regularization schemes. In all these
cases competitive results are obtained. A novel feature of our bound is that
it can be applied in an infinite dimensional setting such as the Lasso in a
separable Hilbert space or multiple kernel learning with a countable number
of kernels.
\end{abstract}

\section{Introduction}

We study a class of regularization methods used to learn a linear function
from a finite set of examples. The regularizer is expressed as an infimum
convolution which involves a set $\mathcal{M}$ of linear transformations
(see equation \eqref{def Omega} below). As we shall see, this regularizer
generalizes, depending on the choice of the set $\mathcal{M}$, the
regularizers used by several learning algorithms, such as ridge regression,
the Lasso, the group Lasso \cite{YuaLin}, multiple kernel learning \cite%
{Lanckriet}, the group Lasso with overlap \cite{Obozinski 2009}, and the
regularizers in \cite{Micchelli 2010}.

We give a bound on the Rademacher average of the linear function class
associated with this regularizer. The result matches existing bounds in the
above mentioned cases but also admits a novel, dimension free
interpretation. In particular, the bound applies to the Lasso in $\ell _{2}$
or to multiple kernel learning with a countable number of kernels, under
certain finite second-moment conditions. \bigskip

Let $H$ be a real Hilbert space with inner product $\langle \cdot ,\cdot
\rangle $ and induced norm $\Vert \cdot \Vert $. Let $\mathcal{M}$ be a
finite or countably infinite set of symmetric bounded linear operators on $H$
such that for every $x\in H$, $x\neq 0,$ there is some linear operator $M\in 
\mathcal{M}$ with $Mx\neq 0$ and that $\sup_{M\in \mathcal{M}}|||M|||<\infty 
$, where $|||\cdot |||$ is the operator norm. Define the function $%
\left\Vert \cdot \right\Vert _{\mathcal{M}}:H\rightarrow 
\mathbb{R}
_{+}\cup \left\{ \infty \right\} $ by%
\begin{equation}
\left\Vert \beta \right\Vert _{\mathcal{M}}=\inf \left\{ \sum_{M\in \mathcal{%
M}}\left\Vert v_{M}\right\Vert :v_{M}\in H,~\sum_{M\in \mathcal{M}%
}Mv_{M}=\beta \right\} .  \label{def Omega}
\end{equation}%
It is shown in Section \ref{section Norm Properties} that the chosen
notation is justified, because $\left\Vert \cdot \right\Vert _{\mathcal{M}}$
is indeed a norm on the subspace of $H$ where it is finite, and the dual
norm is, for every $z\in H$, given by 
\[
\left\Vert z\right\Vert _{\mathcal{M}\ast }=\sup_{M\in \mathcal{M}%
}\left\Vert Mz\right\Vert .
\]%
The somewhat complicated definition of~$\left\Vert \cdot \right\Vert _{%
\mathcal{M}}$ is contrasted by the simple form of the dual norm.\bigskip 

Using well known techniques, as described in \cite{Koltchinskii 2002} and 
\cite{Bartlett 2002}, our study of generalization reduces to the search for
a good bound on the empirical Rademacher complexity of a set of linear
functionals with $\left\Vert \cdot\right\Vert _{\mathcal{M}}$-bounded weight
vectors%
\begin{equation}
\mathcal{R}_{\mathcal{M}}\left( \mathbf{x}\right) =\frac{2}{n}{{\mathbb{E}}}%
\sup_{\beta :~\left\Vert \beta \right\Vert _{\mathcal{M}}\leq
1}\sum_{i=1}^{n}\epsilon _{i}\left\langle \beta ,x_{i}\right\rangle ,
\label{eq Rademacher complexity}
\end{equation}%
where $\mathbf{x}=\left( x_{1},\dots ,x_{n}\right) \in H^{n}$ is a sample
vector representing observations, and $\epsilon _{1},\dots ,\epsilon _{n}$
are Rademacher variables, mutually independent and each uniformly
distributed on $\left\{ -1,1\right\} $\footnote{%
Our definition coincides with the one in \cite{Bartlett 2002}, while other
authors omit the factor of $2$. This is relevant when comparing the
constants in different bounds.}. Given a bound on $\mathcal{R}_{\mathcal{M}%
}\left( \mathbf{x}\right) $ we obtain uniform bounds on the estimation
error, for example using the following standard result (adapted from \cite%
{Bartlett 2002}), where the Lipschitz function $\phi $ is to be interpreted
as a loss function.

\begin{theorem}
Let $\mathbf{X}=\left( X_{1},\dots ,X_{n}\right) $ be a vector of iid random
variables with values in $H$, let $X$ be iid to $X_{1}$, let $\phi :%
\mathbb{R}
\rightarrow \left[ 0,1\right] $ have Lipschitz constant $L$ and $\delta \in
(0,1)$. Then with probability at least $1-\delta $ in the draw of $\mathbf{X}
$ it holds, for every $\beta \in \mathbb{R}^{d}$ with $\left\Vert \beta
\right\Vert _{\mathcal{M}}\leq 1$, that 
\[
{{\mathbb{E}}}\phi \left( \left\langle \beta ,X\right\rangle \right) \leq 
\frac{1}{n}\sum_{i=1}^{n}\phi \left( \left\langle \beta ,X_{i}\right\rangle
\right) +L~\mathcal{R}_{\mathcal{M}}\left( \mathbf{X}\right) +\sqrt{\frac{%
9\ln 2/\delta }{2n}}.
\]
\end{theorem}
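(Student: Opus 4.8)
The plan is to run the classical symmetrization argument (see \cite{Koltchinskii 2002} and \cite{Bartlett 2002}), with one deliberate structural choice: the bounded-difference concentration inequality will be applied to the loss-composed class rather than to the linear class, which is exactly what keeps the argument valid without any boundedness assumption on $X$. Write $F=\{x\mapsto \phi (\langle \beta ,x\rangle ):\left\Vert \beta \right\Vert _{\mathcal{M}}\leq 1\}$, observe that every $f\in F$ takes values in $[0,1]$, and set
\[
G(\mathbf{x})=\sup_{f\in F}\left( \mathbb{E}f(X)-\frac{1}{n}\sum_{i=1}^{n}f(x_{i})\right) .
\]
The theorem follows once we show that, with probability at least $1-\delta $, $G(\mathbf{X})\leq L\,\mathcal{R}_{\mathcal{M}}(\mathbf{X})+\sqrt{9\ln (2/\delta )/(2n)}$, since by definition $G(\mathbf{X})$ dominates $\mathbb{E}\phi (\langle \beta ,X\rangle )-\frac{1}{n}\sum_{i}\phi (\langle \beta ,X_{i}\rangle )$ for every admissible $\beta $.

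First I would concentrate $G(\mathbf{X})$ around its mean: replacing one coordinate of $\mathbf{x}$ changes $G$ by at most $1/n$ because $F\subseteq [0,1]^{H}$, so McDiarmid's inequality gives $G(\mathbf{X})\leq \mathbb{E}G(\mathbf{X})+\sqrt{\ln (2/\delta )/(2n)}$ with probability at least $1-\delta /2$. Next, the standard symmetrization step --- introduce an independent ghost sample, use Jensen's inequality to move the supremum outside the ghost expectation, then insert Rademacher signs via the symmetry of the resulting difference --- bounds $\mathbb{E}G(\mathbf{X})$ by $\mathbb{E}_{\mathbf{X}}\Psi (\mathbf{X})$, where $\Psi (\mathbf{x})=\mathbb{E}_{\epsilon }\sup_{f\in F}\frac{2}{n}\sum_{i}\epsilon _{i}f(x_{i})$ is the empirical Rademacher complexity of $F$, with the factor $2$ matching the convention of \eqref{eq Rademacher complexity}. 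Since $F\subseteq [0,1]^{H}$ as well, $\Psi $ also has bounded differences, now of size $2/n$, so a second application of McDiarmid gives $\mathbb{E}_{\mathbf{X}}\Psi (\mathbf{X})\leq \Psi (\mathbf{X})+\sqrt{2\ln (2/\delta )/n}$ with probability at least $1-\delta /2$.

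It remains to pass from $\Psi $ to $\mathcal{R}_{\mathcal{M}}$. Because $\phi $ is $L$-Lipschitz, the one-sided Ledoux--Talagrand contraction principle gives, for every fixed $\mathbf{x}$, that $\Psi (\mathbf{x})\leq L\,\mathcal{R}_{\mathcal{M}}(\mathbf{x})$; here one first replaces $\phi $ by $\phi -\phi (0)$, which is legitimate because subtracting a constant leaves the empirical Rademacher complexity unchanged (the extra term $\phi (0)\sum_{i}\epsilon _{i}$ factors out of the supremum and has zero expectation). A union bound over the two McDiarmid events, combined with the deterministic inequalities $\mathbb{E}G(\mathbf{X})\leq \mathbb{E}_{\mathbf{X}}\Psi (\mathbf{X})$ and $\Psi \leq L\,\mathcal{R}_{\mathcal{M}}$, yields $G(\mathbf{X})\leq L\,\mathcal{R}_{\mathcal{M}}(\mathbf{X})+\sqrt{\ln (2/\delta )/(2n)}+\sqrt{2\ln (2/\delta )/n}$ with probability at least $1-\delta $; the two deviation terms sum to $(3/\sqrt{2})\sqrt{\ln (2/\delta )/n}=\sqrt{9\ln (2/\delta )/(2n)}$, which is the claimed bound.

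The individual steps are routine, so the only genuine decision is the structural one just highlighted: applying bounded differences directly to $\mathcal{R}_{\mathcal{M}}(\mathbf{X})$ would make the perturbation of a single $X_{i}$ contribute $\frac{2}{n}|\langle \beta ,X_{i}-X_{i}'\rangle |$, forcing an a priori bound on $\Vert X_{i}\Vert $, whereas routing the concentration through the $[0,1]$-valued class $F$ and invoking contraction only afterwards is what makes the final statement dimension-free. A minor point to watch is that the contraction must be used in its one-sided (no absolute value) form, so that the constant is exactly $L$ and not $2L$; otherwise the $L$ in the statement would have to be weakened.
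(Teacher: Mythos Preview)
Your proof is correct and is precisely the standard Bartlett--Mendelson argument that the paper invokes; the paper itself does not supply a proof, stating only that the result is ``adapted from \cite{Bartlett 2002}''. Your structural observation---that McDiarmid must be applied to the $[0,1]$-valued loss class $F$ and to its empirical Rademacher complexity $\Psi$, with contraction used only afterwards, rather than to $\mathcal{R}_{\mathcal{M}}$ directly---is exactly what makes the adaptation go through without any boundedness hypothesis on $X$, and your arithmetic $(1/\sqrt{2}+\sqrt{2})\sqrt{\ln(2/\delta)/n}=\sqrt{9\ln(2/\delta)/(2n)}$ recovers the stated constant.
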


A similar (slightly better) bound is obtained if $\mathcal{R}_{\mathcal{M}%
}\left( \mathbf{X}\right) $ is replaced by its expectation $\mathcal{R}_{%
\mathcal{M}}={{\mathbb{E}}}\mathcal{R}_{\mathcal{M}}\left( \mathbf{X}\right) 
$ (see \cite{Bartlett 2002}).

The following is the main result of this paper and leads to consistency
proofs and finite sample generalization guarantees for all algorithms which
use a regularizer of the form (\ref{def Omega}). A proof is given in Section %
\ref{Section Proofs of bounds}.\bigskip

\begin{theorem}
\label{Theorem Main} Let $\mathbf{x}=\left( x_{1},\dots ,x_{n}\right) \in
H^{n}$ and $\mathcal{R}_{\mathcal{M}}\left( \mathbf{x}\right) $ be defined
as in (\ref{eq Rademacher complexity}). Then%
\begin{eqnarray*}
\mathcal{R}_{\mathcal{M}}\left( \mathbf{x}\right)  &\leq &\frac{2^{3/2}}{n}%
\sqrt{\sup_{M\in \mathcal{M}}\sum_{i=1}^{n}\left\Vert Mx_{i}\right\Vert ^{2}}%
\left( 2+\sqrt{\ln \left( \sum\limits_{M\in \mathcal{M}}\frac{%
\sum_{i}\left\Vert Mx_{i}\right\Vert ^{2}}{\sup\limits_{N\in \mathcal{M}%
}\sum_{j}\left\Vert Nx_{j}\right\Vert ^{2}}\right) }\hspace{0.05cm}\right) 
\\
&\leq &\frac{2^{3/2}}{n}\sqrt{\sum_{i=1}^{n}\left\Vert x_{i}\right\Vert _{%
\mathcal{M}\ast }^{2}}\left( 2+\sqrt{\ln \left\vert \mathcal{M}\right\vert }%
\right) .
\end{eqnarray*}
\end{theorem}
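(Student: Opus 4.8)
The plan is to bound the Rademacher complexity by passing through the dual norm and then controlling a supremum of a Gaussian-like process by a chaining or union-bound argument, exploiting the sub-Gaussian behavior of Rademacher sums. First I would use the duality established earlier in the excerpt: since $\|\beta\|_{\mathcal M}\le 1$ implies $\langle\beta,z\rangle\le\|z\|_{\mathcal M\ast}=\sup_{M\in\mathcal M}\|Mz\|$, we get
\[
\mathcal R_{\mathcal M}(\mathbf x)=\frac{2}{n}\,{\mathbb E}\sup_{\|\beta\|_{\mathcal M}\le1}\sum_{i=1}^n\epsilon_i\langle\beta,x_i\rangle
=\frac{2}{n}\,{\mathbb E}\Bigl\|\sum_{i=1}^n\epsilon_i x_i\Bigr\|_{\mathcal M\ast}
=\frac{2}{n}\,{\mathbb E}\sup_{M\in\mathcal M}\Bigl\|\sum_{i=1}^n\epsilon_i Mx_i\Bigr\|.
\]
So the problem reduces to bounding the expected supremum over the countable family $\mathcal M$ of the Hilbert-space norms of the Rademacher sums $S_M:=\sum_i\epsilon_i Mx_i$.

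Next I would control each $S_M$ individually. For a fixed $M$, the vector-valued Rademacher sum satisfies ${\mathbb E}\|S_M\|\le\sqrt{{\mathbb E}\|S_M\|^2}=\sqrt{\sum_i\|Mx_i\|^2}=:\sigma_M$, and, more importantly, $\|S_M\|$ concentrates: by a bounded-differences / martingale argument (or a Khintchine–Kahane type inequality in Hilbert space), $\|S_M\|$ is sub-Gaussian around its mean with variance proxy of order $\sigma_M^2$, i.e. ${\mathbb P}\{\|S_M\|\ge \sigma_M+t\}\le\exp(-ct^2/\sigma_M^2)$, with an explicit constant. The key normalization is $\sigma^2:=\sup_{M\in\mathcal M}\sigma_M^2=\sup_M\sum_i\|Mx_i\|^2$, which is exactly the quantity under the outer square root in the theorem. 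Writing $\mathcal R_{\mathcal M}(\mathbf x)=\frac2n\,{\mathbb E}\max_M\|S_M\|$ and using a standard maximal inequality for a (possibly infinite) collection of sub-Gaussian random variables with uniformly bounded variance proxy $\sigma^2$, one gets a bound of the form $\frac{2}{n}\bigl(C_1\sigma + C_2\sigma\sqrt{\ln N}\bigr)$ where $N$ counts the effective number of operators. The subtlety for a countably infinite $\mathcal M$ is that a naive $\ln|\mathcal M|$ is useless; instead one groups the operators by the dyadic size of $\sigma_M^2/\sigma^2\in(0,1]$ and sums the tail contributions, which produces the refined logarithmic factor $\ln\bigl(\sum_{M}\sigma_M^2/\sigma^2\bigr)$ appearing in the first inequality, and then $\sum_M\sigma_M^2/\sigma^2\le|\mathcal M|$ yields the cruder second inequality. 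The explicit constants $2^{3/2}$ and the additive $2$ come from carefully tracking the sub-Gaussian constant and the maximal-inequality constant.

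Finally, for the last line of the theorem I would note that $\sum_i\|x_i\|_{\mathcal M\ast}^2=\sum_i\sup_M\|Mx_i\|^2\ge\sup_M\sum_i\|Mx_i\|^2=\sigma^2$, so the first bound implies the second once the log term is weakened via $\sum_M\sigma_M^2/\sigma^2\le|\mathcal M|$ and $\sigma\le\sqrt{\sum_i\|x_i\|_{\mathcal M\ast}^2}$. I expect the main obstacle to be making the sub-Gaussian concentration of the \emph{norm} $\|S_M\|$ quantitative with good constants in an infinite-dimensional Hilbert space — one must either invoke a vector-valued Bernstein/Khintchine inequality or run a bounded-differences argument on $f(\epsilon)=\|\sum_i\epsilon_i Mx_i\|$, whose coordinatewise increments are bounded by $2\|Mx_i\|$, and then combine McDiarmid's inequality with the mean bound; the same increments also feed the dyadic peeling over $\mathcal M$, and reconciling the two so that only a single $\sup_M\sum_i\|Mx_i\|^2$ survives outside the logarithm is the delicate bookkeeping step.
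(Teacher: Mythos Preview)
Your overall architecture matches the paper exactly: pass to the dual norm to reduce to $\frac{2}{n}\,\mathbb{E}\sup_{M\in\mathcal M}\|S_M\|$ with $S_M=\sum_i\epsilon_i Mx_i$, then get sub-Gaussian concentration of $\|S_M\|$ about its mean via the bounded-difference inequality (the coordinatewise increment of $\epsilon\mapsto\|S_M\|$ is at most $2\|Mx_i\|$, so McDiarmid gives $\Pr\{\|S_M\|\ge\mathbb E\|S_M\|+t\}\le\exp(-t^2/(2\sigma_M^2))$ with $\sigma_M^2=\sum_i\|Mx_i\|^2$), and finally combine these tails into a maximal inequality. The second inequality of the theorem is handled exactly as you say.

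Where you diverge from the paper is in the mechanism that turns the family of tail bounds into the refined logarithm $\ln\bigl(\sum_M\sigma_M^2/\sigma^2\bigr)$. You propose dyadic peeling over the ratios $\sigma_M^2/\sigma^2$; the paper does something simpler and more direct. It writes $\mathbb E\sup_M\|S_M\|=\int_0^\infty\Pr\{\sup_M\|S_M\|>t\}\,dt$, splits the integral at a single threshold $\sigma+\delta$, applies the union bound on the tail, and then uses the elementary estimate $\int_\delta^\infty e^{-t^2/(2\sigma_M^2)}\,dt\le(\sigma_M^2/\delta)e^{-\delta^2/(2\sigma_M^2)}\le(\sigma_M^2/\delta)e^{-\delta^2/(2\sigma^2)}$. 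The point is that each $M$ contributes a weight $\sigma_M^2$ (not $1$) in the union-bound sum, so after summing one has $(\sum_M\sigma_M^2)/\delta\cdot e^{-\delta^2/(2\sigma^2)}$, and the choice $\delta\sim\sigma\sqrt{2\ln(\sum_M\sigma_M^2/\sigma^2)}$ balances the two pieces and produces the stated constants without any stratification of $\mathcal M$. Your peeling route can be pushed through, but it is more laborious, the constants are harder to track, and the ``delicate bookkeeping'' you anticipate is in fact unnecessary: the $\sigma_M^2$ prefactor already falls out of the Gaussian tail integral, and that is the whole trick.
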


The second inequality follows from the first, the inequality 
\[
\sup_{M\in \mathcal{M}}\sum_{i=1}^{n}\left\Vert Mx_{i}\right\Vert ^{2}\leq
\sum_{i=1}^{n}\left\Vert x_{i}\right\Vert _{\mathcal{M}\ast }^{2}
\]%
(a fact which will be tacitly used in the sequel) and the observation that
every summand in the logarithm appearing in the first inequality is bounded
by $1$. Of course the second inequality is relevant only if $\mathcal{M}$ is
finite. In this case we can draw the following conclusion: If we have an a
priori bound on $\left\Vert X\right\Vert _{\mathcal{M}\ast }$ for some data
distribution, say $\left\Vert X\right\Vert _{\mathcal{M}\ast }\leq C$, and $%
\mathbf{X}=\left( X_{1},\dots ,X_{n}\right) $, with $X_{i}$ iid to $X$, then 
\[
\mathcal{R}_{\mathcal{M}}\left( \mathbf{X}\right) \leq \frac{2^{3/2}C}{\sqrt{%
n}}\left( 2+\sqrt{\ln \left\vert \mathcal{M}\right\vert }\right) ,
\]%
thus passing from a data-dependent to a distribution dependent bound. In
Section \ref{sec:examples} we show that this recovers existing results for
many regularization schemes.

But the first bound in Theorem \ref{Theorem Main} can be considerably
smaller than the second and may be finite even if $\mathcal{M}$ is infinite.
This gives rise to some appearantly novel features, even in the well studied
case of the Lasso, when there is a (finite but potentially large) $\ell _{2}$%
-bound on the data.

\begin{corollary}
\label{Theorem L2 bound}Under the conditions of Theorem \ref{Theorem Main}
we have%
\[
\mathcal{R}_{\mathcal{M}}\left( \mathbf{x}\right) \leq \frac{2^{3/2}}{n}%
\sqrt{\sup_{M\in \mathcal{M}}\sum_{i}\left\Vert Mx_{i}\right\Vert ^{2}}%
\left( 2+\sqrt{\ln \frac{1}{n}\sum_{i}\sum_{M\in \mathcal{M}}\left\Vert
Mx_{i}\right\Vert ^{2}}\right) +\frac{2}{\sqrt{n}}. 
\]
\end{corollary}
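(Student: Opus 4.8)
The plan is to derive Corollary~\ref{Theorem L2 bound} from the first inequality of Theorem~\ref{Theorem Main} by an elementary manipulation of the logarithmic factor. Write $S=\sup_{M\in\mathcal{M}}\sum_{i}\|Mx_{i}\|^{2}$ and $T=\sum_{M\in\mathcal{M}}\sum_{i}\|Mx_{i}\|^{2}$. We may assume $0<S\leq T<\infty$: if $S=0$ then every $x_{i}=0$ and $\mathcal{R}_{\mathcal{M}}(\mathbf{x})=0$, while if $T=\infty$ the asserted bound is infinite; moreover $S\leq T$ since all summands are nonnegative, so the argument $T/S$ of the logarithm in Theorem~\ref{Theorem Main} is at least $1$. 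The argument of the logarithm in Corollary~\ref{Theorem L2 bound} is $T/n$, and the identity
\[
\ln\frac{T}{S}=\ln\frac{T}{n}+\ln\frac{n}{S}
\]
reduces the task to controlling the contribution of the correction term $\ln(n/S)$.

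First I would dispose of the case $S\geq n$: then $\ln(n/S)\leq 0$, hence $\ln(T/S)\leq\ln(T/n)$, and Corollary~\ref{Theorem L2 bound} follows from Theorem~\ref{Theorem Main} at once, not even using the additive $2/\sqrt{n}$. So suppose $S<n$, so that $\ln(n/S)>0$, and suppose also $\ln(T/n)\geq 0$ (the degenerate case $T<n$ is handled by the cruder estimate $\ln(T/S)\leq\ln(n/S)$ together with the bound of the next paragraph). By subadditivity of the square root, $\sqrt{a+b}\leq\sqrt{a}+\sqrt{b}$ for $a,b\geq 0$, applied with $a=\ln(T/n)$ and $b=\ln(n/S)$,
\[
\sqrt{\ln\frac{T}{S}}\ \leq\ \sqrt{\ln\frac{T}{n}}+\sqrt{\ln\frac{n}{S}} ,
\]
so multiplying through by $\frac{2^{3/2}}{n}\sqrt{S}$ produces the first term on the right-hand side of Corollary~\ref{Theorem L2 bound} plus the quantity $\frac{2^{3/2}}{n}\sqrt{S}\,\sqrt{\ln(n/S)}$.

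It then remains to check that $\frac{2^{3/2}}{n}\sqrt{S}\,\sqrt{\ln(n/S)}\leq 2/\sqrt{n}$. Setting $u=S/n\in(0,1)$ this reads $\frac{2^{3/2}}{\sqrt{n}}\sqrt{u\ln(1/u)}\leq\frac{2}{\sqrt{n}}$, that is, $u\ln(1/u)\leq\frac{1}{2}$; and indeed $u\ln(1/u)=-u\ln u\leq 1/e<1/2$ on $(0,1]$, the maximum being attained at $u=1/e$. Adding the two contributions gives the claim.

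I do not anticipate a genuine obstacle here. The only points requiring care are the legitimacy of the square-root subadditivity step, which needs both arguments nonnegative and hence forces the short case distinction according to whether $S$ and $T$ exceed $n$ (equivalently, reading $\sqrt{\ln(T/n)}$ as $\sqrt{(\ln(T/n))_{+}}$ in the degenerate regime $T<n$), and the numerical check that the correction term fits inside $2/\sqrt{n}$; the slack coming from $1/e<1/2$ makes the latter comfortable.
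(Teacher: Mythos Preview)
Your argument is correct and is essentially the same as the paper's. Both proofs split $\ln(T/S)=\ln(T/n)+\ln(n/S)$ (the paper writes this as the identity $A\ln(B/A)=A\ln(B/n)-n\cdot(A/n)\ln(A/n)$), invoke the elementary bound $-u\ln u\le 1/e$ to control the $\ln(n/S)$ piece, use $\sqrt{a+b}\le\sqrt a+\sqrt b$, and finish with the numerical check $2^{3/2}/\sqrt{e}<2$; your explicit case analysis on whether $S$ and $T$ exceed $n$ is just a slightly more careful bookkeeping of the same steps.
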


A proof is given in Section \ref{Section Proofs of bounds}. To obtain a
novel distribution dependent bound we retain the condition $\left\Vert
X\right\Vert _{\mathcal{M}\ast }\leq C$ and replace finiteness of $\mathcal{M%
}$ by the condition that%
\begin{equation}
R^{2}:={{\mathbb{E}}}\sum_{M\in \mathcal{M}}\left\Vert MX\right\Vert
^{2}<\infty .  \label{eq second moment condition}
\end{equation}%
Taking the expectation in Corollary \ref{Theorem L2 bound} then gives a
bound on the expected Rademacher complexity%
\begin{equation}
\mathcal{R}_{\mathcal{M}}\leq \frac{2^{3/2}C}{\sqrt{n}}\left( 2+\sqrt{\ln
R^{2}}\right) +\frac{2}{\sqrt{n}}.  \label{eq second moment bound}
\end{equation}%
The key features of this result are the dimension-independence and the only
logarithmic dependence on $R^{2}$, which in many applications turns out to
be simply $R^{2}={{\mathbb{E}}}\left\Vert X\right\Vert ^{2}$.

\bigskip
The rest of the paper is organized as follows. In the next section we specialize our results to different regularizers. In Section \ref{sec:proofs} we present the proof of Theorem \ref{Theorem Main} as well as the proof of other results mentioned above. In Section \ref{sec:Lp} we discuss the extension of these results to the $\ell_q$ case. Finally we draw our conclusions and comment on future work.

\section{Examples}

\label{sec:examples} Before giving the examples we mention a great
simplification in the definition of the norm $\left\Vert \cdot \right\Vert _{%
\mathcal{M}}$ which occurs when the members of $\mathcal{M}$ have mutually
orthogonal ranges. A simple argument, given in Proposition \ref{Proposition
simplification} below shows that in this case 
\[
\left\Vert \beta \right\Vert _{\mathcal{M}}=\sum_{M\in \mathcal{M}%
}\left\Vert M^{+}\beta \right\Vert ,
\]
where $M^{+}$ is the pseudoinverse of $M$. If,\textit{\ in addition}, every
member of $\mathcal{M}$ is an orthogonal projection $P$, the norm further
simplifies to%
\[
\left\Vert \beta \right\Vert _{\mathcal{M}}=\sum_{P\in \mathcal{M}%
}\left\Vert P\beta \right\Vert ,
\]%
and the quantity $R^{2}$ occurring in the second moment condition (\ref{eq
second moment condition}) simplifies to%
\[
R^{2}={{\mathbb{E}}}\sum_{P\in \mathcal{M}}\left\Vert PX\right\Vert ^{2}={{%
\mathbb{E}}}\left\Vert X\right\Vert ^{2}.
\]

For the remainder of this section $\mathbf{X}=\left( X_{1},\dots
,X_{n}\right) $ will be a generic iid random vector of data points, $%
X_{i}\in H$, and $X$ will be a generic data variable, iid to $X_{i}$. If $H=%
\mathbb{R}
^{d}$ we write $\left( X\right) _{k}$ for the $k$-th coordinate of $X$, not
to be confused with $X_{k}$, which would be the $k$-th member of the vector $%
\mathbf{X}$.

\subsection{The Euclidean Regularizer}

In this simplest case we set $\mathcal{M}=\left\{ I\right\} $, where $I$ is
the identity operator on the Hilbert space $H$. Then $\left\Vert \beta
\right\Vert _{\mathcal{M}}=\left\Vert \beta \right\Vert$, $\left\Vert
z\right\Vert _{\mathcal{M\ast }}=\left\Vert z\right\Vert$, and the bound on
the empirical Rademacher complexity becomes%
\[
\mathcal{R}_{\mathcal{M}}\left( \mathbf{x}\right) \leq \frac{2^{5/2}}{n}%
\sqrt{\sum_{i}\left\Vert x_{i}\right\Vert ^{2}}, 
\]%
worse by a constant factor of $2^{3/2}$ than the corresponding result in 
\cite{Bartlett 2002}, a tribute paid to the generality of our result.

\subsection{The Lasso}

Let us first assume that $H=%
\mathbb{R}
^{d}$ is finite dimensional and set $\mathcal{M}=\left\{ P_{1},\dots
,P_{d}\right\} $ where $P_{k}$ is the orthogonal projection onto the $1$%
-dimensional subspace generated by the basis vector $e_{k}$. All the above
mentioned simplifications apply and we have $\left\Vert \beta \right\Vert _{%
\mathcal{M}}=\left\Vert \beta \right\Vert _{1}$ and $\left\Vert z\right\Vert
_{\mathcal{M\ast }}=\left\Vert z\right\Vert _{\infty }$. Writing $x_{ik}$
for the $k$-th coordinate of a data-point $x_{i}$, the bound on $\mathcal{R}%
_{\mathcal{M}}\left( \mathbf{x}\right) $ now reads 
\[
\mathcal{R}_{\mathcal{M}}\left( \mathbf{x}\right) \leq \frac{2^{3/2}}{n}%
\sqrt{\sum_{i}\left\Vert x_{i}\right\Vert _{\mathcal{\infty }}^{2}}\left( 2+%
\sqrt{\ln d}\right) . 
\]%
If $\left\Vert X\right\Vert _{\infty }\leq 1$ almost surely we obtain%
\[
\mathcal{R}_{\mathcal{M}}\left( \mathbf{X}\right) \leq \frac{2^{3/2}}{\sqrt{n%
}}\left( 2+\sqrt{\ln d}\right) , 
\]%
which agrees with the bound in \cite{Kakade} on the dominant term (see also 
\cite{Bartlett 2002},\cite{Zhang}).

Our last bound is useless if $d\geq e^{n}$ or if $d$ is infinite. But
whenever the norm of the data has finite second moments we can use Corollary %
\ref{Theorem L2 bound} and (\ref{eq second moment bound}) to obtain%
\[
\mathcal{R}_{\mathcal{M}}\left( \mathbf{X}\right) \leq \frac{2^{3/2}}{\sqrt{n%
}}\left( 2+\sqrt{\ln {{\mathbb{E}}}\left\Vert X\right\Vert _{2}^{2}}\right) +%
\frac{2}{\sqrt{n}}. 
\]%
For nontrivial results ${{\mathbb{E}}}\left\Vert X\right\Vert ^{2}$ only
needs to be subexponential in $n$.

We remark that a similar condition to equation \eqref{eq  second moment condition} for the Lasso, replacing the expectation with the supremum over $X$, has been considered within the context of elastic net regularization \cite{DDR}. 

\subsection{The Weighted Lasso}

The Lasso assigns an equal penalty to all regression coefficients, while
there may be a priori information on the respective significance of the
different coordinates. For this reason different weightings have been
proposed (see e.g. \cite{shimamura}). In our framework an appropriate set of
operators is $\mathcal{M}=\left\{ \alpha _{1}P_{1},\dots ,\alpha
_{k}P_{k},\dots \right\} $, with $\alpha _{k}>0$ where $\alpha _{k}^{-1}$ is
the penalty weight associated with the $k$-th coordinate. Then 
\[
\left\Vert \beta \right\Vert _{\mathcal{M}}=\sum_{k}\alpha
_{k}^{-1}\left\vert \beta _{k}\right\vert 
\]%
and 
\[
\left\Vert z\right\Vert _{\mathcal{M\ast }}=\sup_{k}\alpha _{k}\left\vert
z_{k}\right\vert \text{.} 
\]%
To further illustrate the use of Corollary \ref{Theorem L2 bound} let us
assume that the underlying space $H$ is infinite dimensional (i.e. $H=\ell
_{2}\left( 
\mathbb{N}
\right) $), and make the compensating assumption that $\alpha \in H$, i.e. $%
\sum_{k}\alpha _{k}^{2}=R^{2}<\infty $. For simplicity we also assume that $%
\sup_{k}\alpha _{k}\leq 1$. Then, if $\left\Vert X\right\Vert _{\infty }\leq
1$ almost surely, we have both $\left\Vert X\right\Vert _{\mathcal{M\ast }%
}\leq 1$ and $\sum_{k}\alpha _{k}^{2}\left( X\right) _{k}^{2}\leq R^{2}$.
Again we obtain 
\[
\mathcal{R}_{\mathcal{M}}\left( \mathbf{X}\right) \leq \frac{2^{3/2}}{\sqrt{n%
}}\left( 2+\sqrt{\ln R^{2}}\right) +\frac{2}{\sqrt{n}}. 
\]%
So in this case the second moment bound is enforced by the weighting
sequence.

\subsection{The Group Lasso}

Let $H=%
\mathbb{R}
^{d}$ and let $\left\{ J_{1},\dots ,J_{r}\right\} $ be a partition of the
index set $\left\{ 1,\dots ,d\right\} $. We take $\mathcal{M=}\left\{
P_{J_{1}},\dots ,P_{J_{r}}\right\} $ where $P_{J_{\ell }}=\sum_{i\in J_{\ell
}}P_{i}$ is the projection onto the subspace spanned by the basis vector $%
e_{i}$. The ranges of the $P_{J_{\ell }}$ then provide an orthogonal
decomposition of $%
\mathbb{R}
^{d}$ and the above mentioned simplifications also apply. We get%
\[
\left\Vert \beta \right\Vert _{\mathcal{M}}=\sum_{\ell =1}^{r}\left\Vert
P_{J_{\ell }}\beta \right\Vert 
\]%
and%
\[
\left\Vert z\right\Vert _{\mathcal{M\ast }}=\max_{\ell =1}^{r}\left\Vert
P_{J_{\ell }}z\right\Vert .
\]%
The algorithm which uses $\left\Vert \beta \right\Vert _{\mathcal{M}}$ as a
regularizer is called the group Lasso (see e.g. \cite{YuaLin}). It
encourages vectors $\beta $ whose support lies the union of a small number
of groups $J_{\ell }$ of coordinate indices. If we know that $\left\Vert
P_{J_{\ell }}X\right\Vert \leq 1$ almost surely for all $\ell \in \left\{
1,\dots ,r\right\} $ then we get%
\begin{equation}
\mathcal{R}_{\mathcal{M}}\left( \mathbf{X}\right) \leq \frac{2^{3/2}}{\sqrt{n%
}}\left( 2+\sqrt{\ln r}\right) ,  \label{eq group Lasso bound}
\end{equation}%
in complete symmetry with the Lasso and essentially the same as given in 
\cite{Kakade}. If $r$ is prohibitively large or if different penalties are
desired for different groups, the same remarks apply as in the previous two
sections. Just as in the case of the Lasso the second moment condition (\ref%
{eq second moment condition}) translates to the simple form ${{\mathbb{E}}}%
\left\Vert X\right\Vert _{2}^{2}<\infty $.

\subsection{Overlapping Groups}

In the previous examples the members of $\mathcal{M}$ always had mutually
orthogonal ranges, which gave a simple appearance to the norm $\left\Vert
\beta \right\Vert _{\mathcal{M}}$. If the ranges are not mutually
orthogonal, the norm has a more complicated form. For example, in the group
Lasso setting, if the groups $J_{\ell }$ cover $\left\{ 1,\dots ,d\right\} $%
, but are not disjoint, we obtain the regularizer of \cite{Obozinski 2009},
given by 
\[
\Omega _{\mathrm{overlap}}\left( \beta \right) =\inf \left\{ \sum_{\ell
=1}^{r}\left\Vert v_{\ell }\right\Vert :(v_{\ell })_{jk}=0~\text{if}~k\notin
J_{\ell }~\mathrm{and}~\sum_{\ell =1}^{r}v_{\ell }=\beta \right\} . 
\]%
If $\left\Vert P_{J_{\ell }}X_{i}\right\Vert \leq 1$ almost surely for all $%
\ell \in \left\{ 1,\dots ,r\right\} $ then the Rademacher complexity of the
set of linear functionals with $\Omega _{\mathrm{overlap}}\left( \beta
\right) \leq 1$ is bounded as in (\ref{eq group Lasso bound}), in complete
equivalence to the bound for the group Lasso.

The same bound also holds for the class satisfying $\Omega _{\mathrm{group}%
}\left( \beta \right) \leq 1$, where the function $\Omega _{\mathrm{group}}$
is defined, for every $\beta \in 
\mathbb{R}
^{d}$, as%
\[
\Omega _{\mathrm{group}}\left( \beta \right) =\sum_{\ell =1}^{r}\left\Vert
P_{J_{\ell }}\beta \right\Vert 
\]%
which has been proposed by \cite{Jenatton,Zhao}. To see this we only have to
show that $\Omega _{\mathrm{overlap}}\leq \Omega _{\mathrm{group}}$ which is
accomplished by generating a disjoint partition $\left\{ J_{\ell }^{\prime
}\right\} _{\ell =1}^{r}$ where $J_{\ell }^{\prime }\subseteq J_{\ell }$,
writing $\beta =\sum_{\ell =1}^{r}P_{J_{\ell }^{\prime }}\beta $ and
realizing that $\left\Vert P_{J_{\ell }^{\prime }}\beta \right\Vert \leq
\left\Vert P_{J_{\ell }}\beta \right\Vert $. The bound obtained from this
simple comparison may however be quite loose.

\subsection{Regularizers Generated from Cones}

Our next example considers structured sparsity regularizers as in \cite%
{Micchelli 2010}. Let $\Lambda $ be a nonempty subset of the open positive
orthant in $%
\mathbb{R}
^{d}$ and define a function $\Omega _{\Lambda }:%
\mathbb{R}
^{d}\rightarrow 
\mathbb{R}
$ by 
\[
\Omega _{\Lambda }\left( \beta \right) =\frac{1}{2}\inf_{\lambda \in \Lambda
}\sum_{j=1}^{d}\left( \frac{\beta _{j}^{2}}{\lambda _{j}}+\lambda
_{j}\right) . 
\]%
If $\Lambda $ is a convex cone, then it is shown in \cite{Micchelli 2011}
that $\Omega _{\Lambda }$ is a norm and that the dual norm is given by 
\[
\left\Vert z\right\Vert _{\Lambda \ast }=\sup \left\{ \left(
\sum_{j=1}^{d}\mu _{j}z_{j}^{2}\right) ^{1/2}:\mu _{j}=\lambda /\left\Vert
\lambda \right\Vert _{1}\text{ with }\lambda \in \Lambda \right\} . 
\]%
The supremum in this formula is evidently attained on the set $\mathcal{E}%
\left( \Lambda \right) $ of extreme points of the closure of $\left\{
\lambda /\left\Vert \lambda \right\Vert _{1}:\lambda \in \Lambda \right\} $.
For $\mu \in \mathcal{E}\left( \Lambda \right) $ let $M_{\mu }$ be the
diagonal matrix with entries $\sqrt{\mu _{j}}\delta _{jl}$ and let $\mathcal{%
M}_{\Lambda }$ be the collection of matrices $\mathcal{M}_{\Lambda }=\left\{
M_{\mu }:\mu \in \mathcal{E}\left( \Lambda \right) \right\} $. Then%
\[
\left\Vert z\right\Vert _{\Lambda \ast }=\sup_{M\in \mathcal{M}_{\Lambda
}}\left\Vert Mz\right\Vert . 
\]%
Clearly $\mathcal{M}_{\Lambda }$ is uniformly bounded in the operator norm,
so if $\Lambda $ is a cone and $\mathcal{E}\left( \Lambda \right) $ is at
most countable, then $\left\Vert \cdot \right\Vert _{\Lambda \ast
}=\left\Vert \cdot\right\Vert _{\mathcal{M}\ast }$, $\Omega _{\Lambda
}=\left\Vert \cdot\right\Vert _{\mathcal{M}\ast }$ and our bounds apply. If $%
\mathcal{E}\left( \Lambda \right) $ is finite and $\mathbf{x}$ is a sample
then the Rademacher complexity of the class with $\Omega _{\Lambda }\left(
\beta \right) \leq 1$ is bounded by 
\[
\frac{2^{3/2}}{n}\sqrt{\sum_{i}\left\Vert x_{i}\right\Vert _{\Lambda \ast
}^{2}}\left( 2+\sqrt{\ln \left\vert \mathcal{E}\left( \Lambda \right)
\right\vert }\right) . 
\]

\subsection{Kernel Learning}

This is the most general case to which the simplification applies: Suppose
that $H$ is the direct sum $H=\oplus _{{j}\in {\mathcal{J}}}H_{j}$ of an at
most countable number of 
Hilbert spaces $H_{j}$. 
We set $\mathcal{M}=\left\{ P_{{j}}\right\} _{{j} \in {\mathcal{J}}}$, where 
$P_{j}: H \rightarrow H$ is the projection on $H_{j}$. Then 
\[
\left\Vert \beta \right\Vert _{\mathcal{M}}=\sum_{{j}\in {\mathcal{J}}%
}\left\Vert P_{{j}}\beta \right\Vert 
\]
and 
\[
\left\Vert z\right\Vert _{\mathcal{M\ast }}=\sup_{{j}\in {\mathcal{J}}%
}\left\Vert P_{{j}}z\right\Vert . 
\]
Such a situation arises in multiple kernel learning \cite{Lanckriet} or the
nonparametric group Lasso \cite{Sara} in the following way: One has an input
space $\mathcal{X}$ and a collection $\left\{ K_{{j}}\right\} _{{j}\in {%
\mathcal{J}}}$ of positive definite kernels $K_{{j}}:\mathcal{X\times
X\rightarrow \mathbb{R} }$. Let $\phi_j: {\mathcal{X}} \rightarrow H_j$ be
the feature map representation associated with kernel $K_j$, so that, for
every $x,t \in {\mathcal{X}}$ $K_j(x,t) = \langle \phi_j(x),\phi_j(t)\rangle$
(for background on kernel methods see, for example, \cite{ST}). 

Suppose that $\mathbf{x}=\left( x_{1},\dots ,x_{n}\right) \in \mathcal{X}%
^{n} $ is a sample. Define the kernel matrix $\mathbf{K}_{{j}}=(K_{{j}%
}(x_{i},x_{k}))_{i,k=1}^{n}$. Using this notation the bound in Theorem \ref%
{Theorem Main} reads 
\[
\mathcal{R}\left( (\phi (x_{1}),\dots ,\phi (x_{n}))\right) \leq \frac{%
2^{3/2}}{n}\sqrt{\sup_{{j}\in {\mathcal{J}}}\mathrm{tr}\mathbf{K}_{{j}}}%
\left( 2+\sqrt{\ln \frac{\sum_{{j}\in {\mathcal{J}}}\mathrm{tr}\mathbf{K}_{{j%
}}}{\sup_{{j}\in {\mathcal{J}}}\mathrm{tr}\mathbf{K}_{{j}}}}\right).
\]%
In particular, if ${\mathcal{J}}$ is finite and $K_{{j}}(x,x)\leq 1$ for every $x\in 
\mathcal{X}$ and ${j}\in {\mathcal{J}}$, then the the bound reduces to 
\[
\frac{2^{3/2}}{\sqrt{n}}\left( 2+\sqrt{\ln \left\vert {\mathcal{J}}%
\right\vert }\right) , 
\]%
essentially in agreement with \cite{Cortes,Kakade,Yiming}. For infinite or
prohibitively large ${\mathcal{J}}$ the second moment condition now becomes 
\[
\mathbb{E}\sum_{{j}\in {\mathcal{J}}}K_{{j}}\left( X,X\right) <\infty \text{.%
} 
\]

\section{Proofs}
\label{sec:proofs}
We first give some notation and auxiliary results, then we prove the results
announced in the introduction.

\subsection{Notation and Auxiliary Results}

The Hilbert space $H$ and the collection $M$ are fixed throughout the
following, as is the sample size $n\in 
\mathbb{N}
$.

Recall that $\left\Vert \cdot \right\Vert $ and $\left\langle \cdot ,\cdot
\right\rangle $ denote the norm and inner product in $H$, respectively. For
a linear transformation $M:%
\mathbb{R}
^{n}\rightarrow H$ the Hilbert-Schmidt norm is defined as%
\[
\left\Vert M\right\Vert _{HS}=\left( \sum_{i=1}^{n}\left\Vert
Me_{i}\right\Vert ^{2}\right) ^{1/2}
\]%
where $\{e_{i}:i\in {\mathbb{N}}\}$ is the canonical basis of $%
\mathbb{R}
^{n}$.

We use bold letters ($\mathbf{x}$, $\mathbf{X}$, $\mathbf{\epsilon }$,
\dots) to denote $n$-tuples of objects, such as vectors or random variables.

Let $\mathcal{X}$ be any space. For $\mathbf{x}=\left(
x_{1},\dots,x_{n}\right) \in \mathcal{X}^{n}$, $1\leq k\leq n$ and $y\in 
\mathcal{X}$ we use $\mathbf{x}_{k\leftarrow y}$ to denote the object
obtained from $\mathbf{x}$ by replacing the $k$-th coordinate of $\mathbf{x}$
with $y$. That is 
\[
\mathbf{x}_{k\leftarrow y}=\left(
x_{1},\dots,x_{k-1},y,x_{k+1},\dots,x_{n}\right) \text{.} 
\]%
The following concentration inequality, known as the bounded difference
inequality (see McDiarmid \cite{McDiarmid 1998}), goes back to the work of
Hoeffding \cite{Hoeffding 1963}. We only need it in the weak form stated
below.

\begin{theorem}
\label{Theorem Bded Difference}Let $F:\mathcal{X}^{n}\rightarrow 
\mathbb{R}
$ and write%
\[
B^{2}=\sum_{k=1}^{n}\sup_{y_{1},y_{2}\in \mathcal{X}\text{, }\mathbf{x}\in 
\mathcal{X}^{n}}\left( F\left( \mathbf{x}_{k\leftarrow y_{1}}\right)
-F\left( \mathbf{x}_{k\leftarrow y_{2}}\right) \right) ^{2}. 
\]%
Let $\mathbf{X}=\left( X_{1},\dots,X_{n}\right) $ be a vector of independent
random variables with values in $\mathcal{X}$, and let $\mathbf{X}^{\prime }$
be iid to $\mathbf{X}$. Then for any $t>0$%
\[
\Pr \left\{ F\left( \mathbf{X}\right) >{{\mathbb{E}}} F\left( \mathbf{X}%
^{\prime }\right) +t\right\} \leq e^{-2t^{2}/B^{2}}. 
\]%
\bigskip
\end{theorem}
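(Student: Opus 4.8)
The plan is to use the classical martingale (Azuma--Hoeffding) argument. Since $\mathbf{X}^{\prime}$ is iid to $\mathbf{X}$ we have ${\mathbb{E}}F\left( \mathbf{X}^{\prime}\right) ={\mathbb{E}}F\left( \mathbf{X}\right)$, so it suffices to bound $\Pr \left\{ F\left( \mathbf{X}\right) -{\mathbb{E}}F\left( \mathbf{X}\right) >t\right\}$. First I would introduce the Doob martingale $V_{k}={\mathbb{E}}\left[ F\left( \mathbf{X}\right) \mid X_{1},\dots ,X_{k}\right]$ for $k=0,\dots ,n$, so that $V_{0}={\mathbb{E}}F\left( \mathbf{X}\right)$, $V_{n}=F\left( \mathbf{X}\right)$, and $F\left( \mathbf{X}\right) -{\mathbb{E}}F\left( \mathbf{X}\right) =\sum_{k=1}^{n}D_{k}$ with $D_{k}=V_{k}-V_{k-1}$. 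By construction ${\mathbb{E}}\left[ D_{k}\mid X_{1},\dots ,X_{k-1}\right] =0$.

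Next comes the key step: using independence of the coordinates, $V_{k}$ can be written as $V_{k}=g_{k}\left( X_{1},\dots ,X_{k}\right)$, where $g_{k}\left( x_{1},\dots ,x_{k}\right) ={\mathbb{E}}F\left( x_{1},\dots ,x_{k},X_{k+1},\dots ,X_{n}\right)$, i.e.\ one integrates out the last $n-k$ coordinates against their (product) law. Consequently, conditionally on $X_{1},\dots ,X_{k-1}$, the quantity $D_{k}$ is a function of $X_{k}$ alone whose oscillation is controlled by the bounded-difference hypothesis: for any two admissible values $y_{1},y_{2}$ of $X_{k}$,
\[
\bigl\vert g_{k}\left( X_{1},\dots ,X_{k-1},y_{1}\right) -g_{k}\left( X_{1},\dots ,X_{k-1},y_{2}\right) \bigr\vert \leq \sup_{\mathbf{x}\in \mathcal{X}^{n}}\bigl\vert F\left( \mathbf{x}_{k\leftarrow y_{1}}\right) -F\left( \mathbf{x}_{k\leftarrow y_{2}}\right) \bigr\vert =:c_{k},
\]
where $c_{k}^{2}$ is exactly the $k$-th summand in the definition of $B^{2}$. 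Hence there is a random variable $L_{k}$, measurable with respect to $X_{1},\dots ,X_{k-1}$, with $L_{k}\leq D_{k}\leq L_{k}+c_{k}$ almost surely.

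Then I would invoke Hoeffding's lemma: a zero-mean random variable taking values in an interval of length $c$ has moment generating function at most $e^{s^{2}c^{2}/8}$. Applied conditionally this gives ${\mathbb{E}}\left[ e^{sD_{k}}\mid X_{1},\dots ,X_{k-1}\right] \leq e^{s^{2}c_{k}^{2}/8}$ for every $s>0$. Peeling off the conditional expectations one index at a time via the tower rule yields ${\mathbb{E}}\exp \left( s\left( F\left( \mathbf{X}\right) -{\mathbb{E}}F\left( \mathbf{X}\right) \right) \right) \leq \exp \left( s^{2}B^{2}/8\right)$. A Chernoff bound, $\Pr \left\{ F\left( \mathbf{X}\right) -{\mathbb{E}}F\left( \mathbf{X}\right) >t\right\} \leq e^{-st}{\mathbb{E}}\exp \left( s\left( F-{\mathbb{E}}F\right) \right)$, optimized at $s=4t/B^{2}$, then produces the claimed bound $e^{-2t^{2}/B^{2}}$.

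The main obstacle is the measure-theoretic bookkeeping in the second step: rigorously justifying the representation $V_{k}=g_{k}\left( X_{1},\dots ,X_{k}\right)$ and the resulting almost-sure range bound on $D_{k}$ when $\mathcal{X}$ is an arbitrary measurable space. This uses Fubini together with independence rather than regular conditional probabilities, so it is routine but must be handled with some care. Everything else --- Hoeffding's lemma, the telescoping estimate of the moment generating function, and the Chernoff optimization --- is entirely standard, and since only the one-sided inequality with the crude constants $c_{k}$ is required, no variance-dependent refinements are needed.
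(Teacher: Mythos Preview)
Your argument is the standard Doob-martingale/Azuma--Hoeffding proof of McDiarmid's inequality and is correct; the paper itself does not prove this theorem but simply quotes it as a known result with references to McDiarmid and Hoeffding, so there is no ``paper's own proof'' to compare against beyond noting that your derivation is exactly the classical one underlying those citations.
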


Finally we need a simple lemma on the normal approximation:

\begin{lemma}
\label{Lemma Normal approximation} Let $a,\delta >0$. Then 
\[
\int_{\delta }^{\infty }\exp \left( \frac{-t^{2}}{2a^{2}}\right) dt\leq 
\frac{a^{2}}{\delta }\exp \left( \frac{-\delta ^{2}}{2a^{2}}\right) . 
\]
\end{lemma}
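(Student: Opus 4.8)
The plan is to prove the elementary Gaussian tail bound
\[
\int_{\delta }^{\infty }\exp \left( \frac{-t^{2}}{2a^{2}}\right) dt\leq
\frac{a^{2}}{\delta }\exp \left( \frac{-\delta ^{2}}{2a^{2}}\right)
\]
by the standard trick of inserting a multiplicative factor that is at least $1$ on the domain of integration and makes the integrand an exact derivative. Concretely, for every $t\geq \delta >0$ we have $t/\delta \geq 1$, so
\[
\int_{\delta }^{\infty }\exp \left( \frac{-t^{2}}{2a^{2}}\right) dt\leq
\int_{\delta }^{\infty }\frac{t}{\delta }\exp \left( \frac{-t^{2}}{2a^{2}}\right) dt.
\]
Now the integrand on the right is exactly $-\frac{a^{2}}{\delta }\frac{d}{dt}\exp \left( -t^{2}/(2a^{2})\right)$, so the integral evaluates in closed form: it equals $\frac{a^{2}}{\delta }\left[ -\exp \left( -t^{2}/(2a^{2})\right) \right] _{t=\delta }^{t=\infty }=\frac{a^{2}}{\delta }\exp \left( -\delta ^{2}/(2a^{2})\right)$, since the integrand vanishes at infinity. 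This is precisely the claimed bound.

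There is essentially no obstacle here; the only things to check are the two routine facts that $t\mapsto \exp(-t^{2}/(2a^{2}))$ is integrable on $[\delta ,\infty )$ (which the comparison above also establishes, since the dominating integrand is integrable and the original one is nonnegative) and that $\exp(-t^{2}/(2a^{2}))\to 0$ as $t\to \infty$, both of which are immediate. One should also note $a,\delta >0$ is used to ensure $t/\delta \geq 1$ on $[\delta,\infty)$ and to divide by $\delta$. I would write the argument in two or three lines exactly as above.

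If one wanted a variant avoiding the explicit antiderivative, an integration-by-parts or substitution $u=t^{2}/(2a^{2})$ would work equally well, but the factor-of-$t/\delta$ trick is the cleanest and is the form most commonly cited for this inequality, so that is the route I would take.
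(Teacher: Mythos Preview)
Your proof is correct and is essentially the same as the paper's: both insert the factor $t/\delta\geq 1$ on the domain of integration to turn the integrand into an exact derivative and evaluate. The only cosmetic difference is that the paper first substitutes $t\mapsto t/a$ before inserting the factor, whereas you do it directly; the underlying idea and computation are identical.
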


\begin{proof}
For $t\geq \delta /a$ we have $1\leq at/\delta $. Thus 
\[
\int_{\delta }^{\infty }\exp \left( \frac{-t^{2}}{2a^{2}}\right)
dt=a\int_{\delta /a}^{\infty }e^{-t^{2}/2}dt\leq \frac{a^{2}}{\delta }%
\int_{\delta /a}^{\infty }te^{-t^{2}/2}dt=\frac{a^{2}}{\delta }\exp \left( 
\frac{-\delta ^{2}}{2a^{2}}\right) . 
\]
\end{proof}

\subsection{Properties of $\left\Vert \cdot\right\Vert _{\mathcal{M}}$ and
Duality\label{section Norm Properties}}

We state again the general conditions on the set $\mathcal{M}$.\bigskip 
\begin{condition}
$\mathcal{M}$ is a finite or countably infinite set of symmetric bounded
linear operators on a real separable Hilbert space $H$ such that:

\begin{itemize}
\item[\emph{(a)}] For every $x\in H$ with $x\neq 0,$ there exists $M\in 
\mathcal{M}$ such that $Mx\neq 0$;

\item[\emph{(b)}] $\sup_{M\in \mathcal{M}}|||M|||<\infty $, where $|||\cdot
|||$ is the operator norm.
\end{itemize}
\end{condition}

Denote $\mathcal{V}\left( \mathcal{M}\right) =\left\{ v:v=(v_{M})_{M\in {%
\mathcal{M}}},~v_{M}\in H\right\} $, so the definition of $\left\Vert \beta
\right\Vert _{\mathcal{M}}$ reads%
\[
\left\Vert \beta \right\Vert _{\mathcal{M}}=\inf \left\{ \sum_{M\in \mathcal{%
M}}\left\Vert v_{M}\right\Vert :v\in \mathcal{V}\left( \mathcal{M}\right) 
\text{ and }\sum_{M\in \mathcal{M}}Mv_{M}=\beta \right\} . 
\]

\begin{theorem}
\label{Theorem norm properties} We have that: 

\begin{itemize}
\item[\emph{(i)}] $\left\Vert \cdot\right\Vert _{\mathcal{M}}$ is positive
homogeneous and subadditive on $\ell _{1}\left( \mathcal{M}\right);$

\item[\emph{(ii)}] $\ell _{1}\left( \mathcal{M}\right) $ is a dense subspace
of $H$. If $\mathcal{M}$ is finite or $H$ is finite dimensional then $\ell
_{1}\left( \mathcal{M}\right) =H;$

\item[\emph{(iii)}] $\left\Vert \cdot\right\Vert _{\mathcal{M}}$ is a norm
on $\ell _{1}\left( \mathcal{M}\right).$
\end{itemize}
\end{theorem}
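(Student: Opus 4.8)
The plan is to read all three assertions off the defining infimum, with the hypotheses on $\mathcal{M}$ entering at exactly two points: condition (a) and symmetry in (ii), and the uniform operator bound (b) in (iii). For (i), positive homogeneity follows because a decomposition $\beta=\sum_{M}Mv_{M}$ yields a decomposition $c\beta=\sum_{M}M(cv_{M})$ with $\sum_{M}\Vert cv_{M}\Vert=|c|\sum_{M}\Vert v_{M}\Vert$; taking infima, and running the same argument with $c^{-1}$ when $c\neq 0$, gives $\Vert c\beta\Vert_{\mathcal{M}}=|c|\,\Vert\beta\Vert_{\mathcal{M}}$, the case $c=0$ being trivial. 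Subadditivity follows by concatenating decompositions: from $\beta_{1}=\sum_{M}Mv_{M}$ and $\beta_{2}=\sum_{M}Mw_{M}$ one gets $\beta_{1}+\beta_{2}=\sum_{M}M(v_{M}+w_{M})$ with $\sum_{M}\Vert v_{M}+w_{M}\Vert\leq\sum_{M}\Vert v_{M}\Vert+\sum_{M}\Vert w_{M}\Vert$, and one minimises over the two decompositions separately. Consequently the set $\ell_{1}(\mathcal{M})$ on which $\Vert\cdot\Vert_{\mathcal{M}}$ is finite is stable under addition and scalar multiplication, hence a linear subspace, which is the first assertion of (ii).

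For the density claim in (ii) I would use the orthogonal complement. Fix $z$ with $z\perp\ell_{1}(\mathcal{M})$. For any $M\in\mathcal{M}$ and any $v\in H$, the tuple with $M$-entry $v$ and all other entries zero is admissible, so $Mv\in\ell_{1}(\mathcal{M})$; hence $0=\langle z,Mv\rangle=\langle Mz,v\rangle$, using that $M$ is symmetric, and since $v$ is arbitrary, $Mz=0$. As this holds for every $M\in\mathcal{M}$, condition (a) forces $z=0$, so $\ell_{1}(\mathcal{M})$ is dense. If $H$ is finite dimensional this already gives $\ell_{1}(\mathcal{M})=H$. If $\mathcal{M}$ is finite, then $\ell_{1}(\mathcal{M})$ is just the algebraic sum $\sum_{M\in\mathcal{M}}\mathrm{ran}(M)$ (the $\ell_{1}$ constraint being vacuous over a finite index set), and the equality with $H$ would follow from combining density with closedness of this finite sum of ranges — equivalently, surjectivity of the bounded operator $(v_{M})\mapsto\sum_{M}Mv_{M}$ from $H^{|\mathcal{M}|}$ to $H$. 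This is the step I expect to require the most care, since a sum of ranges of bounded symmetric operators need not be closed in an infinite dimensional $H$, so finiteness of $\mathcal{M}$ has to be exploited essentially here and not merely to make the series converge.

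Finally, for (iii): by (i), $\Vert\cdot\Vert_{\mathcal{M}}$ is a seminorm on $\ell_{1}(\mathcal{M})$, finite there by the very definition of $\ell_{1}(\mathcal{M})$, so only definiteness remains. Suppose $\Vert\beta\Vert_{\mathcal{M}}=0$ and choose, for each $j$, a decomposition $\beta=\sum_{M}Mv_{M}^{(j)}$ with $\sum_{M}\Vert v_{M}^{(j)}\Vert\to 0$ as $j\to\infty$. Then $\Vert\beta\Vert\leq\sum_{M}\Vert Mv_{M}^{(j)}\Vert\leq(\sup_{M}|||M|||)\sum_{M}\Vert v_{M}^{(j)}\Vert\to 0$, so $\beta=0$; this is precisely where hypothesis (b) is used. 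In summary, (i) and the definiteness in (iii) are short computations from the definition (the latter invoking (b)), the density in (ii) is a three-line argument from (a) plus symmetry, and the one genuinely delicate point is the passage from density to $\ell_{1}(\mathcal{M})=H$ when $\mathcal{M}$ is finite.
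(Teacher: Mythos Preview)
Your argument tracks the paper's proof closely in all three parts. For (i) and (iii) your computations are essentially identical to the paper's: subadditivity via combining near-optimal decompositions, and definiteness via the inequality $\Vert\beta\Vert\leq(\sup_{M\in\mathcal{M}}|||M|||)\sum_{M}\Vert v_{M}\Vert$, which is exactly how the paper invokes condition~(b). For the density argument in (ii) you are in fact slightly more direct than the paper: the paper observes that vectors of the form $M^{2}v$ lie in $\ell_{1}(\mathcal{M})$ and then uses $x\perp M^{2}x\Leftrightarrow \Vert Mx\Vert^{2}=0$, whereas you note immediately that $Mv\in\ell_{1}(\mathcal{M})$ for every $v$ and deduce $Mz=0$ from $\langle z,Mv\rangle=\langle Mz,v\rangle$. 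Both routes use symmetry and land on condition~(a) in the same way.

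The one point on which you and the paper diverge is the passage from density to $\ell_{1}(\mathcal{M})=H$ when $\mathcal{M}$ is finite but $H$ is infinite dimensional. The paper dismisses this as ``an easy consequence of the first'' and gives no further argument, while you correctly flag it as the genuinely delicate step. Your caution is well founded: finiteness of $\mathcal{M}$ alone does not force $\sum_{M\in\mathcal{M}}\mathrm{ran}(M)$ to be closed. For instance, take $H=\ell_{2}(\mathbb{N})$ and $\mathcal{M}=\{M\}$ with $M$ the diagonal operator $Me_{k}=k^{-1}e_{k}$; then $M$ is symmetric and bounded, condition~(a) holds since $Mx=0$ forces $x=0$, and condition~(b) is trivial, yet $\ell_{1}(\mathcal{M})=\mathrm{ran}(M)$ is dense in $H$ but strictly smaller. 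So the second sentence of (ii), as stated, appears to require an additional hypothesis (for example that each $M\in\mathcal{M}$ have closed range, which does hold in all of the paper's applications). Your hesitation at this step is therefore not a defect in your argument but an accurate diagnosis of a point the paper leaves unproved.
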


\begin{proof}
(i) Positive homogeneity of $\left\Vert \cdot\right\Vert _{\mathcal{M}}$ is
clear. For subadditivity let $\beta ,\gamma \in \ell _{1}\left( \mathcal{M}%
\right) $. Let $\epsilon >0$ be arbitrary and choose $w^{\beta },w^{\gamma
}\in \mathcal{V}\left( \mathcal{M}\right) $ such that $\sum_{M\in \mathcal{M}%
}Mw_{M}^{\beta }=\beta $, $\sum_{M\in \mathcal{M}}Mw_{M}^{\gamma }=\gamma $, 
$\sum_{M\in \mathcal{M}}\left\Vert w_{M}^{\beta }\right\Vert \leq \left\Vert
\beta \right\Vert _{\mathcal{M}}+\epsilon $ and $\sum_{M\in \mathcal{M}%
}\left\Vert w_{M}^{\gamma }\right\Vert \leq \left\Vert \gamma \right\Vert _{%
\mathcal{M}}+\epsilon $. Then $w^{\beta }+w^{\gamma }\in \mathcal{V}\left( 
\mathcal{M}\right) $ and $\sum_{M\in \mathcal{M}}M\left( w^{\beta
}+w^{\gamma }\right) _{M}=\beta +\gamma $. Thus $w^{\beta }+w^{\gamma }$ is
in the feasable set for the definition of $\left\Vert \beta +\gamma
\right\Vert _{\mathcal{M}}$ and%
\begin{eqnarray*}
\left\Vert \beta +\gamma \right\Vert _{\mathcal{M}} &=&\inf \left\{
\sum_{M\in \mathcal{M}}\left\Vert v_{M}\right\Vert :v\in \mathcal{V}\left( 
\mathcal{M}\right) \text{ and }\sum_{M\in \mathcal{M}}Mv_{M}=\beta +\gamma
\right\} \\
&\leq &\sum_{M\in \mathcal{M}}\Vert w_{M}^{\beta }+w_{M}^{\gamma }\Vert 
\\
& \leq &
\sum_{M\in \mathcal{M}}\Vert w_{M}^{\beta }\Vert +\sum_{M\in \mathcal{M}%
}\left\Vert w_{M}^{\gamma }\right\Vert \leq \left\Vert \beta \right\Vert _{%
\mathcal{M}}+\left\Vert \gamma \right\Vert _{\mathcal{M}}+2\epsilon .
\end{eqnarray*}%
Since $\epsilon $ was arbitrary subadditivity follows.

(ii) It follows from (i) that $\ell _{1}\left( \mathcal{M}\right) $ is a
linear subspace of $H$. Let $S$ be the set of finite linear combinations of
the form 
\[
S=\left\{ \sum_{i=1}^{K}M_{i}v_{i}:K\in 
\mathbb{N}
\text{, }M_{i}\in \mathcal{M}\text{, }v_{i}\in H\right\} . 
\]%
Then $S$ is a linear subspace of $\ell _{1}\left( \mathcal{M}\right) $ and
contains all vectors of the form $MMv=M^{2}v$ where $M\in \mathcal{M}$ and $%
v\in H$. If $x\in H$ is perpendicular to all of $S$ then for all $M\in 
\mathcal{M}$ we must have $x\perp MMx\iff Mx=0$, which implies $x=0$ by
condition (a). This shows that $S$ and therefore also $\ell _{1}\left( 
\mathcal{M}\right) $ are dense in $H$. The second assertion of (ii) is an
easy consequence of the first.

(iii) Suppose $\beta \in \ell _{1}\left( \mathcal{M}\right) $, $\beta \neq 0$
and $\beta =\sum_{\mathcal{M}}Mv_{M}$ with $v\in \mathcal{V}\left( \mathcal{M%
}\right) $. 
\[
0\leq \left\Vert \beta \right\Vert =\left\Vert \sum_{M\in \mathcal{M}%
}Mv_{M}\right\Vert \leq \sup_{M\in \mathcal{M}}|||M|||\sum_{M\in \mathcal{M}%
}\left\Vert v_{M}\right\Vert .
\]%
Taking the infimum on the right hand side we obtain that 
\[
\left\Vert \beta \right\Vert _{\mathcal{M}}\geq \frac{\left\Vert \beta
\right\Vert }{\sup\limits_{{M\in {\mathcal{M}}}}|||M|||}>0,
\]%
where condition (b) was used. Together with (i) this implies that $%
\left\Vert \cdot \right\Vert _{\mathcal{M}}$ is a norm on $\ell _{1}\left( 
\mathcal{M}\right) $.
\end{proof}

From now on we refer to $\ell _{1}\left( \mathcal{M}\right) $ as the normed
linear space with norm $\left\Vert \cdot\right\Vert _{\mathcal{M}}$.

\begin{theorem}
\label{Theorem duality}Let $z\in H$. The linear functional $\beta \mapsto
\left\langle \beta ,z\right\rangle $ is bounded on $\ell _{1}\left( \mathcal{%
M}\right) $ and has norm%
\[
\left\Vert z\right\Vert _{\mathcal{M}\ast }=\sup_{M\in \mathcal{M}%
}\left\Vert Mz\right\Vert . 
\]
\end{theorem}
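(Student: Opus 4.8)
The plan is to establish the identity $\left\Vert z\right\Vert_{\mathcal{M}\ast} = \sup_{M\in\mathcal{M}}\left\Vert Mz\right\Vert$ by proving the two inequalities separately, after first noting that the supremum is finite: by condition (b), $\left\Vert Mz\right\Vert \le |||M|||\,\left\Vert z\right\Vert \le (\sup_{N}|||N|||)\left\Vert z\right\Vert$, so the functional has a chance of being bounded.

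First I would prove $\left|\langle\beta,z\rangle\right| \le (\sup_{M}\left\Vert Mz\right\Vert)\,\left\Vert\beta\right\Vert_{\mathcal{M}}$, giving the bound $\le$. Take any $\beta\in\ell_1(\mathcal{M})$ and any admissible representation $\beta = \sum_{M\in\mathcal{M}} Mv_M$ with $v\in\mathcal{V}(\mathcal{M})$ and $\sum_M\left\Vert v_M\right\Vert<\infty$. Since $M$ is symmetric, $\langle Mv_M, z\rangle = \langle v_M, Mz\rangle$, so by Cauchy–Schwarz and the triangle inequality,
\[
\left|\langle\beta,z\rangle\right| = \left|\sum_{M\in\mathcal{M}}\langle v_M, Mz\rangle\right| \le \sum_{M\in\mathcal{M}}\left\Vert v_M\right\Vert\,\left\Vert Mz\right\Vert \le \Big(\sup_{N\in\mathcal{M}}\left\Vert Nz\right\Vert\Big)\sum_{M\in\mathcal{M}}\left\Vert v_M\right\Vert.
\]
Taking the infimum over all such representations yields $\left|\langle\beta,z\rangle\right| \le (\sup_{N}\left\Vert Nz\right\Vert)\left\Vert\beta\right\Vert_{\mathcal{M}}$, so the functional is bounded with norm at most $\sup_{M}\left\Vert Mz\right\Vert$. (One should check the manipulation of the possibly infinite sum is legitimate, which it is since $\sum_M\left\Vert v_M\right\Vert\,\left\Vert Mz\right\Vert$ converges.)

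For the reverse inequality, I want to show the operator norm is at least $\left\Vert Mz\right\Vert$ for each fixed $M\in\mathcal{M}$; taking the supremum over $M$ then finishes the proof. Fix $M$ and consider the test vector $\beta = Mz$, which lies in $\ell_1(\mathcal{M})$ since it is of the form $Mv$ with $v=z\in H$ (indeed $\beta\in S$ in the notation of Theorem \ref{Theorem norm properties}(ii)). Using the representation $v_{M}=z$ and $v_{N}=0$ for $N\neq M$, we get $\left\Vert\beta\right\Vert_{\mathcal{M}}\le\left\Vert z\right\Vert$. Then
\[
\frac{\left|\langle\beta,z\rangle\right|}{\left\Vert\beta\right\Vert_{\mathcal{M}}} \ge \frac{\langle Mz,z\rangle}{\left\Vert z\right\Vert} = \frac{\langle z, Mz\rangle}{\left\Vert z\right\Vert},
\]
but this only gives $\langle z,Mz\rangle/\left\Vert z\right\Vert$, which is not quite $\left\Vert Mz\right\Vert$ unless $M$ is a projection. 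The fix is to instead test with $\beta = M(Mz) = M^2 z$ is worse; the right choice is $\beta = Mz$ paired with a sharper lower bound on $\left\Vert Mz\right\Vert_{\mathcal{M}\ast}$ — actually the clean route is: for the functional $z$, evaluate on $\beta=Mz/\left\Vert Mz\right\Vert$ scaled so that $\left\Vert\beta\right\Vert_{\mathcal{M}}\le 1$ using $v_M = z$, giving value $\langle Mz,z\rangle = \left\Vert Mz\right\Vert^2/\ldots$; since $M$ symmetric, $\langle Mz,z\rangle$ need not equal $\left\Vert Mz\right\Vert^2$, so instead one uses $v_M = Mz$ is not in $H$-form trouble either. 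The honest main obstacle is exactly this: the lower bound requires exhibiting, for each $M$ and each $\varepsilon>0$, a $\beta$ with $\left\Vert\beta\right\Vert_{\mathcal{M}}\le 1$ and $\langle\beta,z\rangle \ge \left\Vert Mz\right\Vert-\varepsilon$. The correct witness is $\beta = M w$ where $w = Mz/\left\Vert Mz\right\Vert$ (assuming $Mz\neq 0$): then $\left\Vert\beta\right\Vert_{\mathcal{M}}\le\left\Vert w\right\Vert = 1$ and, by symmetry of $M$, $\langle\beta,z\rangle = \langle Mw,z\rangle = \langle w,Mz\rangle = \left\Vert Mz\right\Vert$. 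This gives $\left\Vert z\right\Vert_{\mathcal{M}\ast}\ge\left\Vert Mz\right\Vert$ for every $M$, hence $\ge\sup_M\left\Vert Mz\right\Vert$, completing the argument. Combining the two inequalities establishes the claimed formula.
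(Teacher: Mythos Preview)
Your final argument is correct and uses the same ingredients as the paper's proof: symmetry of $M$ to rewrite $\langle Mv,z\rangle=\langle v,Mz\rangle$, Cauchy--Schwarz, and the witness $v=Mz/\|Mz\|$ (when $Mz\neq 0$) to attain $\|Mz\|$. The paper simply packages these into a single chain of equalities for the dual norm, $F(z)=\inf\{s:s\|\beta\|_{\mathcal{M}}\ge\langle\beta,z\rangle\ \forall\beta\}$, rather than splitting into two inequalities; the mathematical content is the same, though your write-up would benefit from deleting the exploratory dead ends in the lower-bound paragraph and going straight to the correct witness $\beta=M(Mz/\|Mz\|)$.
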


\begin{proof}
Let $F$ be the dual norm. By definition%
\begin{eqnarray*}
F\left( z\right) &=&\inf \left\{ s:s~\left\Vert \beta \right\Vert _{\mathcal{%
M}}-\left\langle \beta ,z\right\rangle \geq 0,\forall \beta \in H\right\} \\
&=&\inf \left\{ s:\sum_{M\in \mathcal{M}}\left( s\left\Vert v_{M}\right\Vert
-\left\langle Mv_{M},z\right\rangle \right) \geq 0\text{, }\forall v\in 
\mathcal{V}\left( \mathcal{M}\right) \right\} \\ 
&=&\inf \left\{ s:s\left\Vert v\right\Vert -\left\langle Mv,z\right\rangle
\geq 0\text{, }\forall v\in H,\forall M\in \mathcal{M}\right\} \\
&=&\inf \left\{ s:s\geq \left\langle v,Mz\right\rangle \text{, }\forall v\in
H,\left\Vert v\right\Vert =1,\forall M\in \mathcal{M}\right\}  \\ 
&=&\inf \left\{ s:s\geq \left\Vert Mz\right\Vert \text{, }\forall M\in 
\mathcal{M}\right\}  \\ 
&=&\sup_{M\in \mathcal{M}}\left\Vert Mz\right\Vert =\left\Vert z\right\Vert
_{\mathcal{M}\ast }.
\end{eqnarray*}
\end{proof}

\begin{proposition}
\label{Proposition simplification} If the ranges of the members of $%
\mathcal{M}$ are mutually orthogonal then for $\beta \in \ell _{1}\left( 
\mathcal{M}\right) $%
\[
\left\Vert \beta \right\Vert _{\mathcal{M}}=\sum_{M\in \mathcal{M}%
}\left\Vert M^{+}\beta \right\Vert ,
\]%
where $M^{+}$ is the pseudoinverse of $M$.
\end{proposition}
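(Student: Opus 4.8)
The plan is to establish the identity $\left\Vert \beta \right\Vert _{\mathcal{M}}=\sum_{M\in \mathcal{M}}\left\Vert M^{+}\beta \right\Vert$ by two inequalities. The key structural fact to exploit is that when the ranges $\mathrm{ran}(M)$ for $M\in\mathcal{M}$ are mutually orthogonal, and since each $M$ is symmetric (so $\mathrm{ran}(M)\perp\ker(M)$ and $H=\overline{\mathrm{ran}(M)}\oplus\ker(M)$), any $\beta\in\ell_1(\mathcal{M})$ has a canonical orthogonal decomposition $\beta=\sum_{M\in\mathcal{M}}P_M\beta$, where $P_M$ is the orthogonal projection onto $\overline{\mathrm{ran}(M)}$; here I would want to first check that $\beta$ actually lies in $\bigoplus_M \overline{\mathrm{ran}(M)}$, which follows because $\beta$ is a sum $\sum_M Mv_M$ of elements each living in the respective range.

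For the inequality $\left\Vert \beta \right\Vert _{\mathcal{M}}\leq\sum_{M}\left\Vert M^{+}\beta \right\Vert$, I would simply exhibit a feasible decomposition: take $v_M=M^+\beta$. One checks $\sum_M M v_M = \sum_M M M^+\beta = \sum_M P_M\beta=\beta$, using the fact that $MM^+$ is the orthogonal projection onto $\overline{\mathrm{ran}(M)}$ (the standard property of the Moore--Penrose pseudoinverse for symmetric, possibly unbounded-inverse but bounded, operators on Hilbert space; one should be a little careful that $M^+\beta$ is well-defined, i.e. that $P_M\beta\in\mathrm{ran}(M)$, or else interpret $M^+$ on its natural domain — this is a point to handle cleanly). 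Then $\left\Vert \beta \right\Vert _{\mathcal{M}}\leq\sum_M\|v_M\|=\sum_M\|M^+\beta\|$.

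For the reverse inequality, let $v=(v_M)$ be any feasible point, so $\sum_M M v_M=\beta$. Apply $P_N$ for fixed $N\in\mathcal{M}$: by orthogonality of the ranges, $P_N(Mv_M)=0$ for $M\neq N$ and $P_N(Nv_N)=Nv_N$, so $P_N\beta=Nv_N$. Hence $M^+$ applied gives $N^+ P_N\beta = N^+ N v_N = P'_N v_N$, where $P'_N$ is the projection onto $\overline{\mathrm{ran}(N^*)}=\overline{\mathrm{ran}(N)}$ (again by symmetry), so $\|N^+\beta\|=\|N^+P_N\beta\|=\|P'_N v_N\|\leq\|v_N\|$. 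Summing over $N$ and taking the infimum over feasible $v$ yields $\sum_M\|M^+\beta\|\leq\left\Vert \beta \right\Vert _{\mathcal{M}}$.

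The main obstacle is purely technical bookkeeping about the pseudoinverse: since $M$ is bounded and symmetric but need not have closed range, $M^+$ need only be densely defined, and one must make sure that in the optimal decomposition $M^+\beta$ lands in the domain of $M^+$ and that $MM^+\beta=P_M\beta$. If $\mathrm{ran}(M)$ is not closed one can argue by an approximation: for any $\epsilon>0$ pick $v_M$ with $\|Mv_M-P_M\beta\|$ small and $\|v_M\|\leq\|M^+P_M\beta\|+\epsilon$ (or instead just prove the cleaner statement under the implicit assumption, standard in these examples, that the relevant ranges are closed — e.g. all the application cases have $M$ a projection or a finite-rank diagonal scaling, where $M^+$ is bounded and everything is literal). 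I would present the finite-rank/closed-range case as the clean argument and remark that the general case follows by the same approximation used for subadditivity in Theorem~\ref{Theorem norm properties}.
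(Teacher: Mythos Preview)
Your proposal is correct and follows essentially the same approach as the paper: exhibit the feasible decomposition $v_M=M^+\beta$ using $MM^+=P_M$ for one inequality, and for the reverse apply $M^+$ (equivalently $P_M$ then $M^+$) to an arbitrary feasible decomposition to get $M^+\beta=M^+Mv_M$ and hence $\|M^+\beta\|\leq\|v_M\|$. The paper's proof is terser and does not explicitly address the closed-range/domain issues for $M^+$ that you flag; your technical caveats are reasonable but do not change the argument.
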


\begin{proof}
The ranges of the members of $\mathcal{M}$ provide an orthogonal
decomposition of $H$, so 
\[
\beta =\sum_{M\in \mathcal{M}}M\left( M^{+}\beta \right) ,
\]%
where we used the fact that $MM^{+}$ is the orthogonal projection onto the
range of $M$. Taking $v_{M}=M^{+}\beta $ this implies that $\left\Vert \beta
\right\Vert _{\mathcal{M}}\leq \sum_{M\in \mathcal{M}}\left\Vert M^{+}\beta
\right\Vert $. On the other hand, if $\beta =\sum_{N\in \mathcal{M}}Nv_{N}$,
then, applying $M^{+}$ to this identity we see that $M^{+}Mv_{M}=M^{+}\beta $
for all $M$, so%
\[
\sum_{M\in \mathcal{M}}\left\Vert v_{M}\right\Vert \geq \sum_{M\in \mathcal{M%
}}\left\Vert M^{+}Mv_{M}\right\Vert =\sum_{M\in \mathcal{M}}\left\Vert
M^{+}\beta \right\Vert ,
\]%
which shows the reverse inequality. \bigskip 
\end{proof}

\subsection{Data and Distribution Dependent Bounds\label{Section Proofs of
bounds}}

We use the bounded difference inequality to derive a concentration
inequality for linearly transformed random vectors.

\begin{lemma}
\label{Lemma McDiarmid Application} Let $\mathbf{\epsilon }=\left( \epsilon
_{1},\dots ,\epsilon _{n}\right) $ be a vector of independent real random
variables with $-1\leq \epsilon _{i}\leq 1$, and $\mathbf{\epsilon }^{\prime
}$ iid to $\mathbf{\epsilon }$. Suppose that $M$ is a linear transformation $%
M:%
\mathbb{R}
^{n}\rightarrow H$.

\begin{itemize}
\item[\emph{(i)}] Then for $t>0$ we have%
\[
\Pr \left\{ \left\Vert M\mathbf{\epsilon }\right\Vert \geq {{\mathbb{E}}}%
\left\Vert M\mathbf{\epsilon }^{\prime }\right\Vert +t\right\} \leq \exp
\left( \frac{-t^{2}}{2\left\Vert M\right\Vert _{HS}^{2}}\right) . 
\]

\item[\emph{(ii)}] If $\mathbf{\epsilon }$ is orthonormal (satisfying ${{%
\mathbb{E}}}\epsilon _{i}\epsilon _{j}=\delta _{ij}$), then 
\begin{equation}
{{\mathbb{E}}}\left\Vert M\mathbf{\epsilon }\right\Vert \leq \left\Vert
M\right\Vert _{HS}.  \label{eq Norm Less Than Frob}
\end{equation}%
and, for every $r>0$,%
\[
\Pr \left\{ \left\Vert M\mathbf{\epsilon }\right\Vert >t\right\} \leq
e^{1/r}\exp \left( \frac{-t^{2}}{\left( 2+r\right) \left\Vert M\right\Vert
_{HS}^{2}}\right) . 
\]
\end{itemize}
\end{lemma}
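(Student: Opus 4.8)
The plan is to prove part (i) as a direct application of the bounded difference inequality (Theorem \ref{Theorem Bded Difference}) to the function $F(\mathbf{\epsilon}) = \left\Vert M\mathbf{\epsilon}\right\Vert$, then to derive part (ii) by combining (i), a second-moment computation, and the normal approximation lemma (Lemma \ref{Lemma Normal approximation}). The whole argument is short; the only real work is bookkeeping the constants so that the tail bound comes out with the stated $(2+r)$ denominator and $e^{1/r}$ prefactor.

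For part (i), first I would verify the bounded-difference constant. Replacing the $k$-th coordinate of $\mathbf{\epsilon}$ by $y_1$ versus $y_2$, with both in $[-1,1]$, changes $M\mathbf{\epsilon}$ by $(y_1 - y_2) M e_k$, so by the triangle inequality $|F(\mathbf{x}_{k\leftarrow y_1}) - F(\mathbf{x}_{k\leftarrow y_2})| \leq |y_1 - y_2|\,\Vert M e_k\Vert \leq 2\Vert M e_k\Vert$. Summing the squares over $k$ gives $B^2 \leq 4\sum_k \Vert M e_k\Vert^2 = 4\Vert M\Vert_{HS}^2$. Plugging into Theorem \ref{Theorem Bded Difference} with $\mathbf{X} = \mathbf{\epsilon}$ yields $\Pr\{\Vert M\mathbf{\epsilon}\Vert \geq \mathbb{E}\Vert M\mathbf{\epsilon}'\Vert + t\} \leq e^{-2t^2/B^2} \leq e^{-t^2/(2\Vert M\Vert_{HS}^2)}$, which is exactly (i).

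For part (ii), the orthonormality assumption $\mathbb{E}\epsilon_i\epsilon_j = \delta_{ij}$ gives $\mathbb{E}\Vert M\mathbf{\epsilon}\Vert^2 = \mathbb{E}\sum_{i,j}\epsilon_i\epsilon_j\langle Me_i, Me_j\rangle = \sum_i \Vert Me_i\Vert^2 = \Vert M\Vert_{HS}^2$, so by Jensen $\mathbb{E}\Vert M\mathbf{\epsilon}\Vert \leq (\mathbb{E}\Vert M\mathbf{\epsilon}\Vert^2)^{1/2} = \Vert M\Vert_{HS}$, which is \eqref{eq Norm Less Than Frob}. For the tail bound, write $a = \Vert M\Vert_{HS}$. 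From (i) together with $\mathbb{E}\Vert M\mathbf{\epsilon}'\Vert \leq a$, for any $\delta > 0$ we get $\Pr\{\Vert M\mathbf{\epsilon}\Vert > a + \delta\} \leq e^{-\delta^2/(2a^2)}$, and the plan is to integrate this tail to control the deviation above a threshold $t$, splitting $t$ into the ``mean part'' $a$ and the ``fluctuation part''; alternatively, and more cleanly, one substitutes $t = a + \delta$ and then optimizes/relaxes the resulting bound $e^{-(t-a)^2/(2a^2)}$ into the claimed form. The key inequality here is that for the cross term one uses $e^{-(t-a)^2/(2a^2)} \leq e^{1/r}e^{-t^2/((2+r)a^2)}$; this is a one-variable calculus fact (complete the square in $t$, or equivalently bound $-(t-a)^2/(2a^2) \le 1/r - t^2/((2+r)a^2)$, which after clearing denominators reduces to a perfect-square inequality $\big(\sqrt{r/(2(2+r))}\,t/a - \sqrt{2(2+r)/r}\big)^2 \ge 0$). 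I expect this constant-chasing — getting precisely the $(2+r)$ and $e^{1/r}$ — to be the only mildly fiddly step; everything else is routine. The role of the free parameter $r$ is that it will later be optimized against $\ln|\mathcal{M}|$ (or $\ln R^2$) when a union bound over $\mathcal{M}$ is taken in the proof of Theorem \ref{Theorem Main}, which is presumably why the lemma is stated in this parametrized form rather than with a fixed constant.
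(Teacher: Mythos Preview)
Your approach matches the paper's almost exactly: part (i) via bounded differences with $B^2 \leq 4\Vert M\Vert_{HS}^2$, and part (ii) via Jensen for the mean bound followed by the calculus inequality $(u-1)^2/2 - u^2/(2+r) \geq -1/r$ (equivalently $(ru-(2+r))^2 \geq 0$, so your perfect-square intuition is right though your displayed square has slightly off constants) applied to $u = t/\Vert M\Vert_{HS}$. Two small notes: Lemma \ref{Lemma Normal approximation} is not actually used here (it enters only later, in Lemma \ref{Lemma Key lemma}), and you should explicitly dispose of the trivial range $t \leq \Vert M\Vert_{HS}$, where the right-hand side of the claimed tail bound exceeds $1$.
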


\begin{proof}
(i) Define $F:\left[ -1,1\right] ^{n}\rightarrow 
\mathbb{R}
$ by $F\left( \mathbf{x}\right) =\left\Vert M\mathbf{x}\right\Vert $. By the
triangle inequality%
\begin{eqnarray*}
&&\sum_{k=1}^{n}\sup_{y_{1},y_{2}\in \left[ -1,1\right] \text{, }\mathbf{x}%
\in \left[ -1,1\right] ^{n}}\left( F\left( \mathbf{x}_{k\leftarrow
y_{1}}\right) -F\left( \mathbf{x}_{k\leftarrow y_{2}}\right) \right) ^{2} \\
&\leq &\sum_{k=1}^{n}\sup_{y_{1},y_{2}\in \left[ -1,1\right] \text{, }%
\mathbf{x}\in \left[ -1,1\right] ^{n}}\left\Vert M\left( \mathbf{x}%
_{k\leftarrow y_{1}}-\mathbf{x}_{k\leftarrow y_{2}}\right) \right\Vert ^{2}
\\
&=&\sum_{k=1}^{n}\sup_{y_{1},y_{2}\in \left[ -1,1\right] }\left(
y_{1}-y_{2}\right) ^{2}\left\Vert Me_{k}\right\Vert ^{2} \\
&\leq &4\left\Vert M\right\Vert _{HS}^{2}
\end{eqnarray*}%
The result now follows from the bounded difference inequality (Theorem \ref%
{Theorem Bded Difference}).

(ii) If $\mathbf{\epsilon }$ is orthonormal then it follows from Jensen's
inequality that 
\[
{{\mathbb{E}}}\left\Vert M\mathbf{\epsilon }\right\Vert \leq \left( {{%
\mathbb{E}}}\left\Vert \sum_{i=1}^{n}\epsilon _{i}Me_{i}\right\Vert
^{2}\right) ^{1/2}=\left( \sum_{i}\left\Vert Me_{i}\right\Vert ^{2}\right)
^{1/2}=\left\Vert M\right\Vert _{HS}. 
\]%
For the second assertion of (ii) first note that from calculus we get $%
\left( t-1\right) ^{2}/2-t^{2}/\left( 2+r\right) \geq -1/r$ for all $t\in 
\mathbb{R}
$. This implies that%
\begin{equation}
e^{-\left( t-1\right) ^{2}/2}\leq e^{1/r}e^{-t^{2}/\left( 2+r\right) }.
\label{eq expo inequality}
\end{equation}%
Since $1/r\geq 1/\left( 2+r\right) $ the inequality to be proved is trivial
for $t\leq \left\Vert M\right\Vert _{HS}$. If $t>\left\Vert M\right\Vert
_{HS}$ then, using ${{\mathbb{E}}}\left\Vert M\mathbf{\epsilon }\right\Vert
\leq \left\Vert M\right\Vert _{HS}$, we have $t-E\left\Vert M\mathbf{%
\epsilon }\right\Vert \geq t-\left\Vert M\right\Vert _{HS}>0$, so by part
(i) and (\ref{eq expo inequality}) we obtain 
\begin{eqnarray*}
\Pr \left\{ \left\Vert M\mathbf{\epsilon }\right\Vert \geq t\right\} &=&\Pr
\left\{ \left\Vert M\mathbf{\epsilon }\right\Vert \geq E\left\Vert M\mathbf{%
\epsilon }\right\Vert +\left( t-E\left\Vert M\mathbf{\epsilon }\right\Vert
\right) \right\} \\
&\leq &\exp \left( \frac{-\left( t-E\left\Vert M\mathbf{\epsilon }%
\right\Vert \right) ^{2}}{2\left\Vert M\right\Vert _{HS}^{2}}\right) \leq
\exp \left( \frac{-\left( t-\left\Vert M\right\Vert _{HS}\right) ^{2}}{%
2\left\Vert M\right\Vert _{HS}^{2}}\right) \\
&=&\exp \left( \frac{-\left( t/\left\Vert M\right\Vert _{HS}-1\right) ^{2}}{2%
}\right) \leq e^{1/r}e^{-\left( t/\left\Vert M\right\Vert _{HS}\right)
^{2}/\left( 2+r\right) } \\
&=&e^{1/r}\exp \left( \frac{-t^{2}}{\left( 2+r\right) \left\Vert
M\right\Vert _{HS}^{2}}\right) .
\end{eqnarray*}%
\bigskip
\end{proof}

We now use integration by parts, a union bound and the above concentration
inequality to derive a bound on the expectation of the supremum of the norms 
$\left\Vert M\mathbf{\epsilon }\right\Vert $. This is the essential step in
the proof of Theorem \ref{Theorem Main}. It is by no means a new technique,
in fact it appears many times in the book by Ledoux and Talagrand \cite%
{Ledoux 1991}, but compared to the combinatorial approach in \cite{Cortes}
it seems more suited to the study of the problem at hand, and gives insights
into the fine structure of the logarithmic factor appearing in bounds for
Lasso-like methods.

\begin{lemma}
\label{Lemma Key lemma}Let $\mathcal{M}$ be a finite or countably infinite
set of linear transformations $M:%
\mathbb{R}
^{n}\rightarrow H$ and $\mathbf{\epsilon }=\left( \epsilon _{1},\dots
,\epsilon _{n}\right) $ a vector of orthonormal random variables (satisfying 
${{\mathbb{E}}}\epsilon _{i}\epsilon _{j}=\delta _{ij}$) with values in $%
\left[ -1,1\right] $. Then%
\[
{{\mathbb{E}}}\sup_{M\in \mathcal{M}}\left\Vert M\mathbf{\epsilon }%
\right\Vert \leq \sqrt{2}\sup_{M\in \mathcal{M}}\left\Vert M\right\Vert
_{HS}\left( 2+\sqrt{\ln \frac{\sum_{M\in \mathcal{M}}\left\Vert M\right\Vert
_{HS}^{2}}{\sup_{M\in \mathcal{M}}\left\Vert M\right\Vert _{HS}^{2}}}\right)
. 
\]
\end{lemma}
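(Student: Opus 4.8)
The plan is to estimate the expectation through the layer--cake identity $\mathbb{E}\sup_{M\in\mathcal{M}}\left\Vert M\mathbf{\epsilon}\right\Vert=\int_{0}^{\infty}\Pr\{\sup_{M\in\mathcal{M}}\left\Vert M\mathbf{\epsilon}\right\Vert>t\}\,dt$, to cut the integral at a threshold $\delta$ that will be fixed only at the end, to bound the integrand by $1$ on $[0,\delta]$, and to control the tail on $[\delta,\infty)$ by a union bound over $\mathcal{M}$ combined with the sub-Gaussian concentration already available. Abbreviate $s:=\sup_{M\in\mathcal{M}}\left\Vert M\right\Vert _{HS}$, $T:=\sum_{M\in\mathcal{M}}\left\Vert M\right\Vert _{HS}^{2}$ and $L:=\ln\!\left(T/s^{2}\right)\ge 0$; if $T=\infty$ the claim is vacuous, so I may assume $T<\infty$, and any $M$ with $\left\Vert M\right\Vert _{HS}=0$ has $M\mathbf{\epsilon}=0$ and may be discarded. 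Then, for every $\delta>s$,
\[
\mathbb{E}\sup_{M\in\mathcal{M}}\left\Vert M\mathbf{\epsilon}\right\Vert\le\delta+\int_{\delta}^{\infty}\Pr\Big\{\sup_{M\in\mathcal{M}}\left\Vert M\mathbf{\epsilon}\right\Vert>t\Big\}\,dt .
\]

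For the tail I would combine part~(i) of Lemma~\ref{Lemma McDiarmid Application} with the bound $\mathbb{E}\left\Vert M\mathbf{\epsilon}\right\Vert\le\left\Vert M\right\Vert _{HS}$ from \eqref{eq Norm Less Than Frob} (valid since $\mathbf{\epsilon}$ is orthonormal), using the sharp Gaussian form \emph{before} the $(2+r)$-relaxation performed in the proof of that lemma: for every $t>\left\Vert M\right\Vert _{HS}$,
\[
\Pr\{\left\Vert M\mathbf{\epsilon}\right\Vert\ge t\}\le\exp\!\left(-\frac{\left(t-\left\Vert M\right\Vert _{HS}\right)^{2}}{2\left\Vert M\right\Vert _{HS}^{2}}\right)=\exp\!\left(-\frac12\Big(\frac{t}{\left\Vert M\right\Vert _{HS}}-1\Big)^{2}\right).
\]
A union bound then yields $\Pr\{\sup_{M\in\mathcal{M}}\left\Vert M\mathbf{\epsilon}\right\Vert>t\}\le\sum_{M\in\mathcal{M}}\exp\!\left(-\frac12\left(t/\left\Vert M\right\Vert _{HS}-1\right)^{2}\right)$ for $t>s$. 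It is essential to keep the whole sum rather than bounding it by $\left\vert \mathcal{M}\right\vert$ copies of a single term: after integration each summand will carry a factor $\left\Vert M\right\Vert _{HS}^{2}$, so the sum reproduces $T=\sum_{M}\left\Vert M\right\Vert _{HS}^{2}$ instead of the cardinality of $\mathcal{M}$, which is precisely what keeps the estimate meaningful for infinite $\mathcal{M}$.

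Next I would integrate term by term. Substituting $u=t/\left\Vert M\right\Vert _{HS}-1$ and applying Lemma~\ref{Lemma Normal approximation} with $a=1$, one gets, for $\delta>s$,
\[
\int_{\delta}^{\infty}\exp\!\left(-\frac12\left(t/\left\Vert M\right\Vert _{HS}-1\right)^{2}\right)dt=\left\Vert M\right\Vert _{HS}\int_{\delta/\left\Vert M\right\Vert _{HS}-1}^{\infty}e^{-u^{2}/2}\,du\le\frac{\left\Vert M\right\Vert _{HS}^{2}}{\delta-\left\Vert M\right\Vert _{HS}}\exp\!\left(-\frac12\Big(\frac{\delta}{\left\Vert M\right\Vert _{HS}}-1\Big)^{2}\right).
\]
Replacing $\left\Vert M\right\Vert _{HS}$ by the larger value $s$ only enlarges the right-hand side (for $0<c\le\delta$ both $c\mapsto(\delta-c)^{-1}$ and $c\mapsto\exp(-\frac12(\delta/c-1)^{2})$ are increasing), so it is at most $\left\Vert M\right\Vert _{HS}^{2}(\delta-s)^{-1}\exp(-\frac12(\delta/s-1)^{2})$. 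Summing over $M$ and inserting into the layer--cake bound gives
\[
\mathbb{E}\sup_{M\in\mathcal{M}}\left\Vert M\mathbf{\epsilon}\right\Vert\le\delta+\frac{T}{\delta-s}\exp\!\left(-\frac12\Big(\frac{\delta}{s}-1\Big)^{2}\right),\qquad\delta>s .
\]

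Finally I would calibrate $\delta:=s\left(1+\sqrt{2(L+1)}\right)$, so that the exponent equals $-(L+1)$; using $T=s^{2}e^{L}$, the second term becomes $\frac{s^{2}e^{L}}{s\sqrt{2(L+1)}}\,e^{-(L+1)}=\frac{s}{e\sqrt{2(L+1)}}\le\frac{s}{\sqrt2\,e}$, while $\delta\le s\left(1+\sqrt2+\sqrt2\,\sqrt{L}\right)$ because $\sqrt{L+1}\le\sqrt{L}+1$. Adding the two contributions and checking the numerical inequality $1+\sqrt2+\frac{1}{\sqrt2\,e}\le 2\sqrt2$ yields $\mathbb{E}\sup_{M\in\mathcal{M}}\left\Vert M\mathbf{\epsilon}\right\Vert\le\sqrt2\,s\,(2+\sqrt{L})$, which is the assertion. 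The one genuinely delicate step is the bookkeeping in the previous paragraph: the dependence on $\left\Vert M\right\Vert _{HS}$ must be kept until it is safe to bound it above by $s$, so that the cardinality-free quantity $T$ emerges, and $\delta$ must be tuned so that the ratio $T/s^{2}$ sitting inside the Gaussian tail is exactly absorbed, leaving only the logarithm $L$.
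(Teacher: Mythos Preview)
Your proof is correct and follows essentially the same route as the paper: layer--cake identity, union bound, the sub-Gaussian tail from Lemma~\ref{Lemma McDiarmid Application}(i) together with $\mathbb{E}\|M\mathbf{\epsilon}\|\le\|M\|_{HS}$, Lemma~\ref{Lemma Normal approximation} for the Gaussian integral, and an optimized choice of cutoff. The only cosmetic difference is that the paper splits the integral at $\mathcal{M}_\infty+\delta$ and then shifts the integration variable, whereas you split at $\delta$ and carry the shift inside the exponent; after the substitution $\delta\leftrightarrow s+\delta$ the two intermediate bounds coincide exactly with the paper's display~(\ref{eq use Hoelder}), and your calibration of $\delta$ and final numerics match the paper's.
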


\begin{proof}
To lighten notation we abbreviate $\mathcal{M}_{\infty }:=\sup_{M\in 
\mathcal{M}}\left\Vert M\right\Vert _{HS}$ below. We now use integration by
parts%
\begin{eqnarray*}
{{\mathbb{E}}}\sup_{M\in \mathcal{M}}\left\Vert M\mathbf{\epsilon }%
\right\Vert &=&\int_{0}^{\infty }\Pr \left\{ \sup_{M\in \mathcal{M}%
}\left\Vert M\mathbf{\epsilon }\right\Vert >t\right\} dt \\
&\leq &\mathcal{M}_{\infty }+\delta +\int_{\mathcal{M}_{\infty }+\delta
}^{\infty }\Pr \left\{ \sup_{M\in \mathcal{M}}\left\Vert M\mathbf{\epsilon }%
\right\Vert >t\right\} dt \\
&\leq &\mathcal{M}_{\infty }+\delta +\sum_{M\in \mathcal{M}}\int_{\mathcal{M}%
_{\infty }+\delta }^{\infty }\Pr \left\{ \left\Vert M\mathbf{\epsilon }%
\right\Vert >t\right\} dt,
\end{eqnarray*}%
where we have introduced a parameter $\delta \geq 0$. The first inequality
above follows from the fact that probabilities never exceed $1$, and the
second from a union bound. Now for any $M\in \mathcal{M}$ we can make a
change of variables and use (\ref{eq Norm Less Than Frob}), which gives ${{%
\mathbb{E}}}\left\Vert M\mathbf{\epsilon }\right\Vert \leq \left\Vert
M\right\Vert _{HS}\leq \mathcal{M}_{\infty }$, so that%
\begin{eqnarray*}
\int_{\mathcal{M}_{\infty }+\delta }^{\infty }\Pr \left\{ \left\Vert M%
\mathbf{\epsilon }\right\Vert >t\right\} dt &\leq &\int_{\delta }^{\infty
}\Pr \left\{ \left\Vert M\mathbf{\epsilon }\right\Vert >{{\mathbb{E}}}%
\left\Vert M\mathbf{\epsilon }\right\Vert +t\right\} dt \\
&\leq &\int_{\delta }^{\infty }\exp \left( \frac{-t^{2}}{2\left\Vert
M\right\Vert _{HS}^{2}}\right) dt \\
&\leq &\frac{\left\Vert M\right\Vert _{HS}^{2}}{\delta }\exp \left( \frac{%
-\delta ^{2}}{2\left\Vert M\right\Vert _{HS}^{2}}\right) ,
\end{eqnarray*}%
where the second inequality follows from Lemma \ref{Lemma McDiarmid
Application} (i), and the third from Lemma \ref{Lemma Normal approximation}.
Substitution in the previous chain of inequalities and using Hoelder's
inequality (in the $\ell _{1}/\ell _{\infty }$-version) give%
\begin{equation}
{{\mathbb{E}}}\sup_{M\in \mathcal{M}}\left\Vert M\mathbf{\epsilon }%
\right\Vert \leq \mathcal{M}_{\infty }+\delta +\frac{1}{\delta }\left(
\sum_{M\in \mathcal{M}}\left\Vert M\right\Vert _{HS}^{2}\right) \exp \left( 
\frac{-\delta ^{2}}{2\mathcal{M}_{\infty }^{2}}\right) .
\label{eq use Hoelder}
\end{equation}%
We now set 
\[
\delta =\sqrt{2\ln \left( e\frac{\sum_{M\in \mathcal{M}}\left\Vert
M\right\Vert _{HS}^{2}}{\mathcal{M}_{\infty }^{2}}\right) }\mathcal{M}%
_{\infty }. 
\]%
Then $\delta \geq 0$ as required. The substitution makes the last term in (%
\ref{eq use Hoelder}) smaller than $\mathcal{M}_{\infty }/\left( e\sqrt{2}%
\right) $, and since $1+1/\left( e\sqrt{2}\right) <\sqrt{2}$, we obtain%
\[
{{\mathbb{E}}}\sup_{M\in \mathcal{M}}\left\Vert M\mathbf{\epsilon }%
\right\Vert \leq \sqrt{2}\mathcal{M}_{\infty }\left( 1+\sqrt{\ln \left( 
\frac{e\sum_{M\in \mathcal{M}}\left\Vert M\right\Vert _{HS}^{2}}{\mathcal{M}%
_{\infty }^{2}}\right) }\right) . 
\]%
Finally we use $\sqrt{\ln es}\leq 1+\sqrt{\ln s}$ for $s\geq 1$.\bigskip
\end{proof}

\begin{proof}[Proof of Theorem \protect\ref{Theorem Main}]
Let $\mathbf{\epsilon }=\left( \epsilon _{1},\dots ,\epsilon _{n}\right) $
be a vector of iid Rademacher variables. For $M\in \mathcal{M}$ we use $M%
\mathbf{x}$ to denote the linear transformation $M\mathbf{x}:%
\mathbb{R}
^{n}\rightarrow H$ given by $\left( M\mathbf{x}\right) \mathbf{y}%
=\sum_{i}\left( Mx_{i}\right) y_{i}$. We have%
\[
\mathcal{R}_{\mathcal{M}}\left( \mathbf{x}\right) =\frac{2}{n}{{\mathbb{E}}}%
\sup_{\beta :\left\Vert \beta \right\Vert _{\mathcal{M}}\leq 1}\left\langle
\beta ,\sum_{i=1}^{n}\epsilon _{i}x_{i}\right\rangle \leq \frac{2}{n}{{%
\mathbb{E}}}\left\Vert \sum_{i=1}^{n}\epsilon _{i}x_{i}\right\Vert _{%
\mathcal{M\ast }}=\frac{2}{n}{{\mathbb{E}}}\sup_{M\in \mathcal{M}}\left\Vert
M\mathbf{x\epsilon }\right\Vert . 
\]%
Applying Lemma \ref{Lemma Key lemma} to the set of transformations $\mathcal{%
M}\mathbf{x}=\left\{ M\mathbf{x}:M\in \mathcal{M}\right\} $ gives%
\[
\mathcal{R}_{\mathcal{M}}\left( \mathbf{x}\right) \leq \frac{%
2^{3/2}\sup_{M\in \mathcal{M}}\left\Vert M\mathbf{x}\right\Vert _{HS}}{n}%
\left( 2+\sqrt{\ln \frac{\sum_{M\in \mathcal{M}}\left\Vert M\mathbf{x}%
\right\Vert _{HS}^{2}}{\sup_{M\in \mathcal{M}}\left\Vert M\mathbf{x}%
\right\Vert _{HS}^{2}}}\right) . 
\]%
Substitution of $\left\Vert M\mathbf{x}\right\Vert
_{HS}^{2}=\sum_{i}\left\Vert Mx_{i}\right\Vert ^{2}$ gives the first
inequality of Theorem \ref{Theorem Main} and%
\[
\sup_{M\in \mathcal{M}}\left\Vert M\mathbf{x}\right\Vert _{HS}^{2}\leq
\sum_{i}\sup_{M\in \mathcal{M}}\left\Vert Mx_{i}\right\Vert
^{2}=\sum_{i}\left\Vert x_{i}\right\Vert _{\ast \mathcal{M}}^{2} 
\]%
gives the second inequality.
\end{proof}

\bigskip

\begin{proof}[Proof of Corollary \protect\ref{Theorem L2 bound}]
From calculus we find that $t\ln t\geq -1/e$ for all $t>0$. For $A,B>0$ and $%
n\in 
\mathbb{N}
$ this implies that 
\begin{equation}
A\ln \frac{B}{A}=n\left[ \left( A/n\right) \ln \left( B/n\right) -\left(
A/n\right) \ln \left( A/n\right) \right] \leq A\ln \left( B/n\right) +n/e.
\label{eq logbound}
\end{equation}%
Now multiply out the first inequality of Theorem \ref{Theorem Main} and use (%
\ref{eq logbound}) with 
\[
A=\sup_{M\in \mathcal{M}}\sum_{i}\left\Vert Mx_{i}\right\Vert ^{2}\text{ and 
}B=\sum_{M\in \mathcal{M}}\sum_{i}\left\Vert Mx_{i}\right\Vert ^{2}. 
\]%
Finally use $\sqrt{a+b}\leq \sqrt{a}+\sqrt{b}$ for $a,b>0$ and the fact that 
$2^{3/2}/\sqrt{e}\leq 2$.\bigskip
\end{proof}


\section{Extension to the $\ell _{q}\left( \mathcal{M}\right)$ Case}
\label{sec:Lp}
There is a rather obvious extension of our framework, which should be
mentioned for completeness: Let $q$ and $p$ be conjugate exponents (i.e. $%
1/q+1/p=1$) and define 
\[
\left\Vert \beta \right\Vert _{\mathcal{M}_{q}}=\inf \left\{ \left(
\sum_{M\in \mathcal{M}}\left\Vert v_{M}\right\Vert ^{q}\right)
^{1/q}:v_{M}\in H\text{ and }\sum_{M\in \mathcal{M}}Mv_{M}=\beta \right\} , 
\]%
in analogy to (\ref{def Omega}). Then $\left\Vert \beta \right\Vert _{%
\mathcal{M}_{q}}$ is a norm and the dual norm is given by%
\[
\left\Vert z\right\Vert _{\mathcal{M}_{q}\ast }=\left( \sum_{M\in \mathcal{M}%
}\left\Vert Mz\right\Vert ^{p}\right) ^{1/p}. 
\]%
The proof of these facts is omitted in this version of the paper. In the
following we give a result, which can be applied to cases analogous to those
in Section \ref{sec:examples}, where it recovers existing results up to
constant multiplicative factors.

\begin{theorem}
\label{Theorem L_q bound}Let $\mathbf{x}$ be a sample and $\mathcal{R}_{%
\mathcal{M}_{q}}\left( \mathbf{x}\right) $ the empirical Rademacher
complexity of the class of linear functions parameterized by $\beta $ with $%
\left\Vert \beta \right\Vert _{\mathcal{M}_{q}}\leq 1$. Then for $1<q\leq 2$%
\[
\mathcal{R}_{\mathcal{M}_{q}}\left( \mathbf{x}\right) \leq \frac{2^{3/2}}{n}%
\sqrt{\pi p\sum_{i}\left\Vert x_{i}\right\Vert _{\mathcal{M}_{q}\mathcal{%
\ast }}^{2}}. 
\]
\end{theorem}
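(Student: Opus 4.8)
The plan is to mirror the structure of the proof of Theorem~\ref{Theorem Main}, replacing the $\ell_\infty$ bound on the dual norm (which produced the $\sup_{M}$ and the logarithmic factor) by an $\ell_p$ bound. Writing $\mathbf{\epsilon}=(\epsilon_1,\dots,\epsilon_n)$ for iid Rademacher variables and $M\mathbf{x}:\mathbb{R}^n\to H$ for the transformation $(M\mathbf{x})\mathbf{y}=\sum_i (Mx_i)y_i$, the starting point is
\[
\mathcal{R}_{\mathcal{M}_q}\left(\mathbf{x}\right)=\frac{2}{n}\,{{\mathbb{E}}}\sup_{\left\Vert\beta\right\Vert_{\mathcal{M}_q}\leq 1}\left\langle\beta,\sum_i\epsilon_i x_i\right\rangle\leq\frac{2}{n}\,{{\mathbb{E}}}\left\Vert\sum_i\epsilon_i x_i\right\Vert_{\mathcal{M}_q\ast}=\frac{2}{n}\,{{\mathbb{E}}}\left(\sum_{M\in\mathcal{M}}\left\Vert M\mathbf{x}\mathbf{\epsilon}\right\Vert^p\right)^{1/p}.
\]
By Jensen's inequality (since $p\geq 2$ and $t\mapsto t^{2/p}$ is concave on the relevant range, or more simply by concavity of $t\mapsto t^{1/p}$ applied after raising to the $p$-th power inside), ${{\mathbb{E}}}(\sum_M\|M\mathbf{x}\mathbf{\epsilon}\|^p)^{1/p}\leq(\sum_M{{\mathbb{E}}}\|M\mathbf{x}\mathbf{\epsilon}\|^p)^{1/p}$. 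So the task reduces to controlling the $p$-th moment ${{\mathbb{E}}}\|M\mathbf{x}\mathbf{\epsilon}\|^p$ for a single transformation.

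The second step is to bound this $p$-th moment by $(\,\text{const}\cdot p\,)^{p/2}\|M\mathbf{x}\|_{HS}^p$. This is the Khintchine/moment estimate: from Lemma~\ref{Lemma McDiarmid Application}(ii), $\|M\mathbf{x}\mathbf{\epsilon}\|$ is subgaussian with parameter $\|M\mathbf{x}\|_{HS}$, and integrating the tail bound $\Pr\{\|M\mathbf{x}\mathbf{\epsilon}\|>t\}\leq e^{1/r}\exp(-t^2/((2+r)\|M\mathbf{x}\|_{HS}^2))$ against $p\,t^{p-1}\,dt$ yields ${{\mathbb{E}}}\|M\mathbf{x}\mathbf{\epsilon}\|^p\leq e^{1/r}((2+r)\|M\mathbf{x}\|_{HS}^2)^{p/2}\,\Gamma(p/2+1)$; optimizing $r$ (e.g.\ $r=2$) and using $\Gamma(p/2+1)^{1/p}=O(\sqrt{p})$ via Stirling gives a factor $\sqrt{\pi p}$ after bookkeeping (the $\pi$ comes out of the standard bound $\Gamma(p/2+1)\leq\sqrt{\pi}\,(p/2)^{(p+1)/2}e^{-p/2}$ or the Gaussian normalization). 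Summing over $M$ then gives
\[
{{\mathbb{E}}}\left(\sum_{M\in\mathcal{M}}\left\Vert M\mathbf{x}\mathbf{\epsilon}\right\Vert^p\right)^{1/p}\leq\sqrt{\pi p}\left(\sum_{M\in\mathcal{M}}\left\Vert M\mathbf{x}\right\Vert_{HS}^p\right)^{1/p}.
\]

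The third and final step converts the $\ell_p$ sum of Hilbert–Schmidt norms into the data quantity appearing in the statement. Since $\|M\mathbf{x}\|_{HS}^2=\sum_i\|Mx_i\|^2$ and $p\geq 2$, Minkowski's inequality for the $\ell_{p/2}$ norm over the index $i$ gives $(\sum_M(\sum_i\|Mx_i\|^2)^{p/2})^{2/p}=\big\|(\|Mx_i\|^2)_{i,M}\big\|_{\ell_i^1(\ell_M^{p/2})}\leq\sum_i(\sum_M\|Mx_i\|^p)^{2/p}=\sum_i\|x_i\|_{\mathcal{M}_q\ast}^2$, hence $(\sum_M\|M\mathbf{x}\|_{HS}^p)^{1/p}\leq(\sum_i\|x_i\|_{\mathcal{M}_q\ast}^2)^{1/2}$. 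Multiplying the three estimates together with the leading $2/n$ gives exactly $\frac{2^{3/2}}{n}\sqrt{\pi p\sum_i\|x_i\|_{\mathcal{M}_q\ast}^2}$.

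The main obstacle is the second step: getting the constant to come out as precisely $\sqrt{\pi p}$ (rather than merely $O(\sqrt p)$) requires choosing the parameter $r$ in Lemma~\ref{Lemma McDiarmid Application}(ii) carefully and using a sharp form of Stirling's bound for $\Gamma(p/2+1)$, and one must check the estimate degrades gracefully as $p\to\infty$ (i.e.\ $q\to 1$). The first and third steps are routine applications of Jensen's and Minkowski's inequalities; it is only the moment constant that demands attention, and one should double-check whether equality in the Khintchine-type bound for small $n$ forces a slightly different numerical constant, in which case the statement's $\sqrt{\pi p}$ should be read as the clean asymptotic form valid up to the stated multiplicative slack.
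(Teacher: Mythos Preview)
Your proposal is correct and follows essentially the same route as the paper: duality to pass to the $\ell_p$ dual norm, Jensen to pull the expectation inside, the subgaussian tail from Lemma~\ref{Lemma McDiarmid Application}(ii) (with $r=2$) integrated against $p\,t^{p-1}\,dt$ to control the $p$-th moment, and finally the triangle inequality in $\ell_{p/2}$ to convert $(\sum_M\|M\mathbf{x}\|_{HS}^p)^{1/p}$ into $(\sum_i\|x_i\|_{\mathcal{M}_q\ast}^2)^{1/2}$. The paper packages the moment estimate as Lemma~\ref{Lemma L_p}, using the double-factorial bound $\int_0^\infty t^{p-1}e^{-t^2/2}\,dt\leq\sqrt{\pi/2}\,p^{p/2-1}$ rather than Stirling for $\Gamma(p/2+1)$, but the two are equivalent. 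One bookkeeping slip: your intermediate constant should be $\sqrt{2\pi p}$, not $\sqrt{\pi p}$ (the paper's Lemma~\ref{Lemma L_p} yields $2^{1/p+1/2}\sqrt{\pi p/2}\leq\sqrt{2\pi p}$); that missing $\sqrt{2}$ is exactly what turns the leading $2/n$ into the stated $2^{3/2}/n$, so your final line does not follow from your displayed intermediate bound as written.
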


The proof is analogous to the proof of Theorem \ref{Theorem Main}, but
somewhat more straightforward.

\begin{lemma}
\label{Lemma L_p}Let $\mathcal{M}$ be a finite or countably infinite set of
linear transformations $M:%
\mathbb{R}
^{n}\rightarrow H$ and $\mathbf{\epsilon }=\left( \epsilon _{1},\dots
,\epsilon _{n}\right) $ a vector of orthonormal random variables (satisfying 
${{\mathbb{E}}}\epsilon _{i}\epsilon _{j}=\delta _{ij}$) with values in $%
\left[ -1,1\right] $. Then for $p\geq 2$%
\[
{{\mathbb{E}}}\left[ \left( \sum_{M\in \mathcal{M}}\left\Vert M\mathbf{%
\epsilon }\right\Vert ^{p}\right) ^{1/p}\right] \leq \sqrt{2\pi p}\left(
\sum_{M\in \mathcal{M}}\left\Vert M\right\Vert _{HS}^{p}\right) ^{1/p}. 
\]
\end{lemma}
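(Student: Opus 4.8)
The plan is to mimic the structure of the proof of Lemma \ref{Lemma Key lemma}, but to work with the $\ell_p$ sum of the norms $\left\Vert M\mathbf{\epsilon}\right\Vert$ directly rather than with their supremum, which avoids the union bound and the delicate choice of the cutoff $\delta$. First I would observe that by the triangle inequality in $L^p$ of the probability space (Minkowski's integral inequality), it suffices to bound each moment separately: more precisely, writing $S=\left(\sum_{M\in\mathcal{M}}\left\Vert M\mathbf{\epsilon}\right\Vert^p\right)^{1/p}$, Jensen's inequality (since $p\geq 1$ and $t\mapsto t^{1/p}$ is concave) gives
\[
{{\mathbb{E}}}\,S={{\mathbb{E}}}\left(\sum_{M}\left\Vert M\mathbf{\epsilon}\right\Vert^p\right)^{1/p}\leq\left({{\mathbb{E}}}\sum_{M}\left\Vert M\mathbf{\epsilon}\right\Vert^p\right)^{1/p}=\left(\sum_{M}{{\mathbb{E}}}\left\Vert M\mathbf{\epsilon}\right\Vert^p\right)^{1/p}.
\]
So the whole problem reduces to controlling the single $p$-th moment ${{\mathbb{E}}}\left\Vert M\mathbf{\epsilon}\right\Vert^p$ for a fixed transformation $M$.

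Next I would estimate ${{\mathbb{E}}}\left\Vert M\mathbf{\epsilon}\right\Vert^p$ by integrating the tail bound of Lemma \ref{Lemma McDiarmid Application}(ii). Using integration by parts,
\[
{{\mathbb{E}}}\left\Vert M\mathbf{\epsilon}\right\Vert^p=\int_0^\infty p\,t^{p-1}\Pr\left\{\left\Vert M\mathbf{\epsilon}\right\Vert>t\right\}dt\leq\int_0^\infty p\,t^{p-1}\,e^{1/r}\exp\left(\frac{-t^2}{(2+r)\left\Vert M\right\Vert_{HS}^2}\right)dt.
\]
The integral on the right is a standard Gamma integral: substituting $u=t^2/\big((2+r)\left\Vert M\right\Vert_{HS}^2\big)$ turns it into $\tfrac{p}{2}e^{1/r}\big((2+r)\left\Vert M\right\Vert_{HS}^2\big)^{p/2}\Gamma(p/2)$. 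Taking $r$ to be a fixed convenient constant (e.g. $r=p$ or $r=2$, whichever makes the bookkeeping cleanest — likely $r$ chosen so that $e^{1/r}$ and the factor $(2+r)$ combine neatly) and applying a standard estimate for $\Gamma(p/2)$ such as $\Gamma(p/2)\leq (p/2)^{p/2}$ or a Stirling-type bound, one obtains $\big({{\mathbb{E}}}\left\Vert M\mathbf{\epsilon}\right\Vert^p\big)^{1/p}\leq c\sqrt{p}\,\left\Vert M\right\Vert_{HS}$ for an explicit absolute constant; tracking the constants should yield $c\leq\sqrt{2\pi}$.

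Finally I would assemble the pieces: plugging the single-moment bound into the Jensen step gives
\[
{{\mathbb{E}}}\,S\leq\left(\sum_{M}\big(c\sqrt{p}\,\left\Vert M\right\Vert_{HS}\big)^p\right)^{1/p}=c\sqrt{p}\left(\sum_{M}\left\Vert M\right\Vert_{HS}^p\right)^{1/p},
\]
which is the claimed inequality once $c\sqrt{p}\leq\sqrt{2\pi p}$ is verified. The main obstacle is purely the constant-chasing: one must choose $r$ and bound $\Gamma(p/2)$ sharply enough that the product of $e^{1/r}$, $(2+r)^{p/2}$, $\Gamma(p/2)^{1/p}$ and the factor $(p/2)^{1/p}$ (from the $p$ in front of the integral) stays below $\sqrt{2\pi p}$ uniformly for $p\geq 2$; no conceptual difficulty arises, since the convergence of $\sum_M\left\Vert M\right\Vert_{HS}^p$ is handled automatically by the monotone convergence theorem when $\mathcal{M}$ is countably infinite (the integrand is nonnegative and the partial sums increase).
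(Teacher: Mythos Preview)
Your proposal is correct and follows essentially the same route as the paper: Jensen to pull the $1/p$ outside, layer-cake formula for ${{\mathbb{E}}}\|M\mathbf{\epsilon}\|^p$, the tail bound of Lemma~\ref{Lemma McDiarmid Application}(ii), and a change of variables to a Gaussian-type moment integral. The paper makes the specific choice $r=2$ and bounds the resulting integral via $\int_0^\infty t^{p-1}e^{-t^2/2}\,dt\leq\sqrt{\pi/2}\,(p-2)!!\leq\sqrt{\pi/2}\,p^{p/2-1}$ rather than through the Gamma function, which keeps the constant-chasing short and lands exactly on $\sqrt{2\pi p}$.
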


\begin{proof}
First note that by standard results on the absolute moments of the normal
distribution%
\[
\int_{0}^{\infty }t^{p-1}\exp \left( \frac{-t^{2}}{2}\right) dt\leq \sqrt{%
\frac{\pi }{2}}\left( p-2\right) !!\leq \sqrt{\frac{\pi }{2}}\left( 1\cdot
3\cdot ...\cdot p-2\right) \leq \sqrt{\frac{\pi }{2}}p^{p/2-1}, 
\]%
so%
\begin{equation}
\left( p\int_{0}^{\infty }t^{p-1}\exp \left( \frac{-t^{2}}{2}\right)
dt\right) ^{1/p}\leq \sqrt{\frac{\pi }{2}}^{1/p}p^{1/2}\leq \sqrt{\frac{\pi p%
}{2}}.  \label{eq absolute moments}
\end{equation}%
Jensen's inequality and integration by parts give%
\begin{eqnarray*}
{{\mathbb{E}}}\left( \sum_{M\in \mathcal{M}}\left\Vert M\mathbf{\epsilon }%
\right\Vert ^{p}\right) ^{1/p} &\leq &\left( \sum_{M\in \mathcal{M}}{{%
\mathbb{E}}}\left\Vert M\mathbf{\epsilon }\right\Vert ^{p}\right)
^{1/p}=\left( \sum_{M\in \mathcal{M}}p\int_{0}^{\infty }\Pr \left\{
\left\Vert M\mathbf{\epsilon }\right\Vert >t\right\} t^{p-1}dt\right) ^{1/p}
\\
&\leq &\left( 2p\sum_{M\in \mathcal{M}}\int_{0}^{\infty }t^{p-1}\exp \left( 
\frac{-t^{2}}{4\left\Vert M\right\Vert _{HS}^{2}}\right) dt\right) ^{1/p},
\end{eqnarray*}%
where Lemma \ref{Lemma McDiarmid Application} (ii) was used in the last step
with $r=2$. A change of variables $t\rightarrow t/\left( \sqrt{2}\left\Vert
M\right\Vert _{HS}\right) $ gives%
\begin{eqnarray*}
{{\mathbb{E}}}\left( \sum_{M\in \mathcal{M}}\left\Vert M\mathbf{\epsilon }%
\right\Vert ^{p}\right) ^{1/p} &\leq &\left( 2p\int_{0}^{\infty }t^{p-1}\exp
\left( \frac{-t^{2}}{2}\right) dt\sum_{M\in \mathcal{M}}2^{p/2}\left\Vert
M\right\Vert _{HS}^{p}\right) ^{1/p} \\
&\leq &2^{1/p+1/2}\sqrt{\frac{\pi p}{2}}\left( \sum_{M\in \mathcal{M}%
}\left\Vert M\right\Vert _{HS}^{p}\right) ^{1/p},
\end{eqnarray*}%
where we use (\ref{eq absolute moments}) in the last inequality.\bigskip
\end{proof}

\begin{proof}[Proof of Theorem \protect\ref{Theorem L_q bound}]
As in the proof of Theorem \ref{Theorem Main} we proceed using duality and
apply Lemma \ref{Lemma L_p} to the set of transformations $\mathcal{M}%
\mathbf{x}=\left\{ M\mathbf{x}:M\in \mathcal{M}\right\} $.%
\begin{eqnarray*}
\mathcal{R}_{\mathcal{M}_{q}}\left( \mathbf{x}\right) &\leq &\frac{2}{n}{{%
\mathbb{E}}}\left\Vert \sum \epsilon _{i}x_{i}\right\Vert _{\mathcal{M}%
_{q}\ast }=\frac{2}{n}{{\mathbb{E}}}\left[ \left( \sum_{M\in \mathcal{M}%
}\left\Vert M\mathbf{x\epsilon }\right\Vert ^{p}\right) ^{1/p}\right] \\
&\leq &\frac{2}{n}\sqrt{2\pi p}\left( \sum_{M\in \mathcal{M}}\left\Vert M%
\mathbf{x}\right\Vert _{HS}^{p}\right) ^{1/p}=\frac{2}{n}\sqrt{2\pi p\left(
\sum_{M\in \mathcal{M}}\left( \sum_{i}\left\Vert Mx_{i}\right\Vert
^{2}\right) ^{p/2}\right) ^{2/p}} \\
&\leq &\frac{2}{n}\sqrt{2\pi p\sum_{i}\left( \sum_{M\in \mathcal{M}}\left(
\left\Vert Mx_{i}\right\Vert ^{2}\right) ^{p/2}\right) ^{2/p}}=\frac{2^{3/2}%
}{n}\sqrt{\pi p\sum_{i}\left\Vert x_{i}\right\Vert _{\mathcal{M}_{p}\mathcal{%
\ast }}^{2}},
\end{eqnarray*}%
where the last inequality is just the triangle inequality in $\ell _{p/2}$.
\end{proof}

\section{Conclusion and Future Work}

We have presented a bound on the Rademacher average for linear function
classes described by infimum convolution norms which are associated with a
class of bounded linear operators on a Hilbert space. We highlighted the generality of the approach
and its dimension independent features.

When the bound is applied to specific cases ($\ell _{2}$, $\ell _{1}$, mixed 
$\ell _{1}/\ell _{2}$ norms) it recovers existing bounds (up to small
changes in the constants). The bound is however more general and allows for
the possibility to remove the \textquotedblleft $\log d$\textquotedblright\
factor which appears in previous bounds. Specifically, we have shown that
the bound can be applied in infinite dimensional settings, provided that the
moment condition \eqref{eq second moment condition} is satisfied. 
We have also applied the bound to multiple kernel learning. 
While in the standard case the bound is only slightly worse in the
constants, the bound is potentially smaller and applies to the more general
case in which there is a countable set of kernels, provided the expectation
of the sum of the kernels is bounded.

An interesting question is whether the bound presented is tight. As noted in 
\cite{Cortes} the ``$\log d$'' is unavoidable. This result immediately
implies that our bound is also tight, since we may choose $R^2=d$ in
equation \eqref{eq second moment condition}.

A potential future direction of research is the application of our
results in the context of sparsity oracle inequalities. In particular, it would be interesting to modify the analysis in 
\cite{lounici}, in order to derive dimension independent bounds. Another interesting scenario is 
the combination of our analysis with metric entropy.

\subsection*{Acknowledgements}
We wish to thank Andreas Argyriou, Bharath Sriperumbudur, Alexandre Tsybakov and Sara van de Geer, for useful comments.
Part of this work was supported by EPSRC Grant EP/H027203/1 and Royal Society International Joint Project Grant 2012/R2.

\end{document}